\documentclass{article}
\usepackage[utf8]{inputenc}

\usepackage{amsmath, amssymb, amsthm, amsfonts}

\newtheorem{definition}{Definition}
\newtheorem{prop}{Proposition}
\newtheorem*{example}{Example} 

\newtheorem{theorem}{Theorem}
\newtheorem{corollary}{Corollary}
\newtheorem{lemma}{Lemma}

\setcounter{MaxMatrixCols}{16}

\usepackage{enumerate}
\usepackage{graphicx}
\usepackage{array}
\usepackage[margin=1in]{geometry}
\setlength{\extrarowheight}{.12cm}
\usepackage{tikz}
\usepackage{hyperref, doi}
\usepackage{epsf}

\usepackage[utf8]{inputenc} 
\usepackage[T1]{fontenc}    
\usepackage{float}

\usepackage{caption}
\usepackage{subcaption}
\usepackage{array}
\usepackage{algpseudocode,algorithm,algorithmicx}
\usepackage{makecell}
\usepackage{titling}

\setcounter{secnumdepth}{4}
\setcounter{tocdepth}{4}
\newcolumntype{?}[1]{!{\vrule width #1}}

\newcommand\blfootnote[1]{%
  \begingroup
  \renewcommand\thefootnote{}\footnote{#1}%
  \addtocounter{footnote}{-1}%
  \endgroup
}

\title{On Alignment in Deep Linear Neural Networks}

%

\author{Adityanarayanan Radhakrishnan \thanks{Laboratory for Information \& Decision Systems, and 
 Institute for Data, Systems, and Society, 
 Massachusetts Institute of Technology} $^{,~*}$ 
\and Eshaan Nichani \footnotemark[1] $^{,~*}$
\and Daniel Irving Bernstein \footnotemark[1]
\and Caroline Uhler \footnotemark[1] $^{,}$ \thanks{Department of Biosystems Science and Engineering, ETH Zurich,  Switzerland} }

\thanksmarkseries{arabic}

\begin{document}

 \maketitle

\blfootnote{\hspace{-1.5mm} $^*$ Equal Contribution}

\begin{abstract}
    We study the properties of \emph{alignment}, a form of implicit regularization, in linear neural networks under gradient descent.  We define alignment for fully connected networks with multidimensional outputs and show that it is a natural extension of alignment in networks with 1-dimensional outputs as defined by Ji and Telgarsky, 2018.  While in fully connected networks, there always exists a global minimum corresponding to an aligned solution, we analyze alignment as it relates to the training process.  Namely, we characterize when alignment is an \emph{invariant} of training under gradient descent by providing necessary and sufficient conditions for this invariant to hold. In such settings, the dynamics of gradient descent simplify, thereby allowing us to provide an explicit learning rate under which the network converges linearly to a global minimum.  We then analyze networks with layer constraints such as convolutional networks. In this setting, we prove that gradient descent is equivalent to projected gradient descent, and that alignment is impossible with sufficiently large datasets.  
\end{abstract}


\section{Introduction}
\label{sec: Introduction}

Although overparameterized deep networks can interpolate randomly labeled training data \cite{DuOptimizesOverparameterizedNetworks, DuAdaptiveMethods}, training overparameterized networks with modern optimizers often leads to solutions that generalize well. This suggests that there is a form of \textit{implicit regularization} occurring through training \cite{RethinkingGeneralization}.  


As an example of implicit regularization, the authors in \cite{LogisticAlignment} proved that the layers of linear neural networks used for binary classification on linearly separable datasets become aligned in the limit of training.  That is, for a linear network parameterized by the matrix product $W_d, W_{d-1}, \ldots W_1$, the top left/right singular vectors $u_i$ and $v_i$ of layer $W_i$ satisfy $|v_{i+1}^T u_i| \rightarrow 1$ as the number of gradient descent steps goes to infinity.

Alignment of singular vector spaces between adjacent layers allows for the network representation to be drastically simplified  (see Equation \eqref{eq: aligned_network});  namely, the product of all layers becomes a product of diagonal matrices with the exception of the outermost unitary matrices.  If alignment is an invariant of training, then optimization over the set of weight matrices reduces to optimization over the set of singular values of weight matrices. Thus, importantly, alignment of singular vector spaces allows for the gradient descent update rule to be simplified significantly, which was used in \cite{LogisticAlignment} to show convergence to a max-margin solution.

In this work, we generalize the definition of \emph{alignment} to the multidimensional setting. We study when alignment can occur and moreover, under which conditions it is an invariant of training in linear neural networks under gradient descent. Our main contributions are as follows:

\begin{enumerate}
\item We extend the definition of alignment from the 1-dimensional classification setting to the multi-dimensional setting (Definition~\ref{def: Alignment for 2 matrices}) and characterize when alignment is an invariant of training in linear fully connected networks with multi-dimensional outputs (Theorem~\ref{thm: Alignment for fully connected networks with multi-dimensional output}). 

\item  We demonstrate that alignment is an invariant for fully connected networks with multidimensional outputs only in special problem classes including autoencoding, matrix factorization and matrix sensing. This is in contrast to networks with 1-dimensional outputs, where there exists an initialization such that adjacent layers remain aligned throughout training under \textit{any} real-valued loss function and any training dataset (Proposition~\ref{prop: Alignment for 1 dimensional outputs}).

\item Alignment largely simplifies the analysis of training linear networks: We provide an explicit learning rate under which gradient descent converges linearly to a global minimum under alignment in the squared loss setting (Proposition~\ref{prop: linear convergence}). 

\item We prove that alignment cannot occur, let alone be invariant, in networks with constrained layer structure (such as convolutional networks), when the amount of training data dominates the dimension of the layer structure (Theorem  \ref{thm: No Alignment in constrained layers}). 

\item We support our theoretical findings via experiments in Section \ref{sec: Experiments}.
\end{enumerate}

As a consequence, our characterization of the invariance properties of alignment 
provides settings under which the gradient descent dynamics can be simplified and the implicit regularization properties can be fully understood, yet also shows that further results are required to explain implicit regularization in linear neural networks more generally.

\section{Related Work}
\label{sec: Related Work}

Implicit regularization in overparameterized networks has become a subject of significant interest \cite{ImplictRegularizationBach, GunasekarImplicitBiasOptimizationGeometry, GunasekarImplicitRegularizationLinearConvNetworks, ImplicitSelfRegularization, neyshabur2014search}.  In order to characterize the specific form of implicit regularization, several works have focused on analyzing deep \emph{linear} networks \cite{DeepMatrixFactorization, GunasekarImplicitRegularizationLinearConvNetworks, MatrixFactorizationNIPS2017, GunasekarLinearlySeparableData}.  Even though such networks can only express linear maps, parameter optimization in linear networks is non-convex and is studied in order to obtain intuition about optimization of deep networks more generally.  

One such form of implicit regularization is \emph{alignment}, identified by \cite{LogisticAlignment} in their analysis of linear fully connected networks with 1-dimensional outputs trained on linearly separable data.  They proved that in the limit of training, each layer, after normalization, approaches a rank $1$ matrix, i.e. 
\begin{align*}
    \lim_{t\rightarrow \infty} \frac{W_i^{(t)}} { \|W_i^{(t)}\|_F} = u_i v_i^T
\end{align*}
and that adjacent layers, $W_{i+1}$ and $W_i$ become \textit{aligned}, i.e. $|v_{i+1}^T u_i| \rightarrow 1.$

In addition, \cite{LogisticAlignment} proved that alignment in this setting occurs concurrently with convergence to the max-margin solution. Follow-up work mainly focused on this convergence phenomenon and gave explicit convergence rates for overparameterized networks trained with gradient descent \cite{AroraTwoLayerNetwork, SGDOptimization}.

While the connection to alignment was not mentioned in their work, the authors in ~\cite{ImplictRegularizationBach} begin to generalize alignment to the case of multidimensional outputs. In particular, they consider two-layer networks initialized so that the two layers are aligned with each other and where both of the layers are aligned with the data. We generalize this to networks of any depth, showing that our definition of alignment corresponds to the initialization considered in \cite{ImplictRegularizationBach}.  Moreover, we establish necessary and sufficient conditions for when alignment is an invariant of training in Theorem \ref{thm: Alignment for fully connected networks with multi-dimensional output} instead of assuming these conditions as in \cite{ImplictRegularizationBach}. Furthermore, their result on sequential learning of components is one of a variety of results which can be derived via our singular value update rule presented in Corollary \ref{cor:singular-update}. 

Balancedness is another closely related form of implicit regularization in linear neural networks. It was introduced in~\cite{Balancedness} and defined as the property that if $W_i^TW_i = W_{i+1}W_{i+1}^T$ for all $i$ at initialization, then this property is \emph{invariant} through gradient flow. \cite{BalancednessJasonLee} presented a more general form, which is that the difference $W_i^TW_i - W_{i+1}W_{i+1}^T$ is constant through gradient flow. In practice, however, analyses are based on this quantity being close to or exactly zero (in Frobenius norm). In this exact setting, the connection between balancedness and alignment becomes clear since balancedness implies alignment of singular vector spaces between consecutive layers.  
To study gradient descent, slightly more general notions such as approximate balancedness \cite{ApproximateBalancedness} and $\epsilon$-balancedness have been introduced. \cite{BalancednessJasonLee} also defined balancedness with respect to convolutional networks, showing that under gradient flow, the difference in the norm of the weights of consecutive layers is an invariant. Generally, the goal of identifying invariants of training such as balancedness or alignment is to help understand both the dynamics of training and properties of solutions at the end of training.



\section{Definition of Alignment in the Multi-dimensional Setting}
\label{sec: Definition of alignment in multi-dimensional setting}
In this section, we first define alignment for linear neural networks with multi-dimensional outputs.  We then define when alignment is an invariant of training.

We consider \emph{linear} neural networks. Let $f: \mathbb{R}^{k_0} \rightarrow{} \mathbb{R}^{k_{d}}$ denote such a $d$-layer network, i.e. 
\begin{align}
    \label{eq: Network Representation}
    f(x) &= W_d W_{d-1}\ldots W_1 x,
\end{align}
where $W_i \in \mathbb{R}^{k_{i} \times k_{i-1}}$ for $i \in [d]$, where we follow the convention that $[d]=\{1, 2, \ldots d \}$.  Let $(X, Y) \in \mathbb{R}^{k_0 \times n} \times \mathbb{R}^{k_{d} \times n}$ denote the set of training data pairs $\{(x^{(i)}, y^{(i)})\}$ for $i \in [n]$. Gradient descent with learning rate $\gamma$ is used to find a solution to the following optimization problem:
\begin{align}
    \arg\min_{f \in \mathcal{F}} \; \frac{1}{2n} \displaystyle\sum\limits_{i=1}^{n} \ell(f(x^{(i)}), y^{(i)}),
\end{align}
where $\mathcal{F}$ is the set of linear functions represented by $f$ and $\ell$ is a real-valued loss function.  When not stated otherwise,  we assume $\ell(f(x^{(i)}), y^{(i)}) =  \|y^{(i)} -  f(x^{(i)})\|^2_2$, which is the squared loss (MSE). In addition, we denote by $W_i^{(t)}$ for ${t \in \mathbb{Z}_{\geq 0}}$ the weight matrix $W_i$ after $t$ steps of gradient descent.  When there are no additional constraints on the matrices $W_i$, then $f$ is a fully connected network.

We next introduce a generalized form of the singular value decomposition:

\begin{definition}
An \textbf{unsorted, signed singular value decomposition (usSVD)} of a matrix $A \in \mathbb{R}^{m \times n}$ is a triple $U \in \mathbb{R}^{m \times m},\Sigma \in \mathbb{R}^{m \times n}, V \in \mathbb{R}^{n \times n}$ such that $U, V$ are orthonormal matrices,
$\Sigma$ is diagonal, and $A = U \Sigma V^T$.  
\end{definition}

In contrast to the usual definition of singular value decomposition (SVD) of a matrix, the diagonal entries of $\Sigma$ may be in any order and take negative values. Throughout, we will refer to the entries of $\Sigma$ in a usSVD as singular values and the vectors in $U, V$ as singular vectors. Using the usSVD, we now generalize the notion of alignment from~\cite{LogisticAlignment} to the multi-dimensional setting.

\begin{definition}
    \label{def: Alignment for 2 matrices}
    Let $f= W_d W_{d-1} \ldots W_1$ be a linear network. We say that $f$ is \textbf{aligned} if there exists a usSVD $W_i = U_i\Sigma_iV^T_i$ with $U_i = V_{i+1}$ for all $i \in [d-1]$. (We also say that a matrix $A$ is aligned with another matrix $B$ if there exist usSVD's $A = U_A\Sigma_Av_A^T, B = U_B\Sigma_BV_B^T$ such that $V_A = U_B$.)
\end{definition}

Note that if $W_i$ and $W_{i+1}$ are rank $1$ matrices in an aligned network $f$, then the inner product of the first columns of $V_{i+1}$ and $U_i$ is $1$ in absolute value. Hence Definition~\ref{def: Alignment for 2 matrices} is consistent with alignment in the 1-dimensional setting from \cite{LogisticAlignment}.

We next define when alignment is an invariant of training for deep linear networks. Again, such invariants are of interest since they may provide insights into properties of trained networks and significantly simplify the dynamics of gradient descent.

\begin{definition}
    \label{def: Invariance of Alignment} 
    \textbf{Alignment is an invariant of training} for a linear neural network $f$ if there exists an initialization $\{W_j^{(0)}\}_{j=1}^{d}$ such that $W_1^{(\infty)}, W_{2}^{(\infty)}, \ldots , W_d^{(\infty)}$ achieves zero training error \footnote{The interpolation condition in this definition (i.e., achieving zero training error) is important in ruling out several architectures where the layers are trivially aligned.  For example, if all layers are constrained to be diagonal matrices throughout training, then the layers are all trivially aligned, but cannot interpolate datasets where the target is not the product of a diagonal matrix with the input.} and for all gradient descent steps $t \in \mathbb{Z}_{\geq 0}$
    \vspace{-0.2cm}
    \begin{enumerate}
        \item[(a)] the network $f$ is aligned; 
        \item[(b)] $W_i^{(t)} = U_i \Sigma_i^{(t)} V_i^T$ for all $i \in \{2, \ldots d-1\}$, that is, $U_i, V_i$ are not updated;
            \vspace{-0.2cm}
        \item[(c)] $W_1^{(t)} = U_1 \Sigma_1^{(t)} {V_1^{(t)}}^T$ and $W_d^{(t)} = U_d^{(t)} \Sigma_d^{(t)} V_d^T$, that is, $U_1$ and $V_d$ are not updated.
    \end{enumerate}
    \vspace{-0.2cm}
     If additionally, $V_1$  and $U_d$ are not updated for any $t\in\mathbb{Z}_{\geq 0}$, then we say that \textbf{strong alignment is an invariant of training}. 
\end{definition}
When alignment is an invariant of training, there are important consequences for training. In particular, note that when the network $f$ is aligned with usSVDs $W_i = U_i \Sigma_i V_i^T$ for all $1\leq i \leq d$, then 
\begin{align}
    \label{eq: aligned_network}
     f(x) = W_d\cdots W_1x = U_d 
    \left(\prod\limits_{i=0}^{d-1}\Sigma_{d-i}\right) V_1^Tx.
\end{align}
Hence if alignment is an invariant of training, then the singular vectors of layers $2$ through $d-1$ are never updated and the analysis of gradient descent can be limited to the singular values of the layers and the matrices $V_1$ and $U_d$.  





\textbf{Remarks.} For the remainder of the paper, we assume that the gradient of the loss function at initialization $\{W_i^{(0)}\}_{i=1}^{d}$ is non-zero.  Otherwise, training with gradient descent would not proceed. We also only consider datasets $(X,Y)$ for which there is a linear network that achieves loss zero. This is consistent with the assumptions in~\cite{LogisticAlignment}.


\section{Alignment in Fully Connected Networks}
\label{sec:Alignment in Fully Connected Networks}

In this section, we first characterize when alignment is an invariant of training for  fully connected networks (Theorem \ref{thm: Alignment for fully connected networks with multi-dimensional output}). In particular, we show that this is not the case in general. We then present special classes of problems for which alignment is an invariant of training, namely autoencoding, matrix factorization, and matrix sensing.  In contrast, for a linear neural network with 1-dimensional outputs, we demonstrate that there exists an initialization for which the layers remain aligned throughout training given any dataset and any real-valued loss function.  Finally, we discuss various consequences of alignment, including a proof of linear convergence of gradient descent to an interpolating solution.

 \subsection{Characterization of Alignment with Multi-dimensional Outputs}
\label{subsec: Data Conditions for Alignment in FC Networks}

Theorem~\ref{thm: Alignment for fully connected networks with multi-dimensional output} is one of our main results and characterizes when alignment is an invariant of training in a fully connected network with multi-dimensional outputs. To simplify notation, we consider the case when the layers are square matrices, i.e. $k_i = k_j$ for all $0\leq i,j \leq d$. The general result for non-square matrices is provided in Appendix~\ref{proof strong alignment non square}. 


\begin{theorem}
\label{thm: Alignment for fully connected networks with multi-dimensional output}
    Let $f: \mathbb{R}^{k} \rightarrow \mathbb{R}^{k}$ be a linear fully connected network with $d \geq 3$  square layers of size $k>1$.  Alignment is an invariant of training under the squared loss on a dataset $(X, Y)\in\mathbb{R}^{k\times n}\times \mathbb{R}^{k\times n}$ if and only if there exist orthonormal matrices $U, V \in \mathbb{R}^{k \times k}$ such that $U^TYX^TV$ and $V^TXX^TV$ are diagonal.
\end{theorem}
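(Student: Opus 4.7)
The plan is to translate the invariance condition into a structural constraint on the gradient descent updates expressed in the aligned basis, and then dispatch both directions of the biconditional. Throughout, denote the candidate aligned decomposition at time $t$ by $W_i^{(t)} = U_i\Sigma_i^{(t)}V_i^T$ for middle layers $2\le i\le d-1$ (with $U_i = V_{i+1}$), together with $W_1^{(t)}=U_1\Sigma_1^{(t)}V_1^{(t)T}$ and $W_d^{(t)}=U_d^{(t)}\Sigma_d^{(t)}V_d^T$. Write
\[
\Sigma^{(t)} := \Sigma_d^{(t)}\cdots\Sigma_1^{(t)},\quad P^{(t)} := U_d^{(t)T}YX^TV_1^{(t)},\quad Q^{(t)} := V_1^{(t)T}XX^TV_1^{(t)},\quad E^{(t)}:=P^{(t)}-\Sigma^{(t)}Q^{(t)}.
\]
Using the telescoping cancellations $V_{i+1}^TU_i=I$ inside the products $W_d\cdots W_{i+1}$ and $W_{i-1}\cdots W_1$, direct computation of the squared-loss gradient yields, for each middle index,
\[
\nabla_{W_i} L = -\tfrac{1}{n}\,U_i\,\bigl(\Sigma_{i+1}^{(t)T}\cdots\Sigma_d^{(t)T}\bigr)\,E^{(t)}\,\bigl(\Sigma_1^{(t)T}\cdots\Sigma_{i-1}^{(t)T}\bigr)\,V_i^T,
\]
with analogous formulas (in which $E^{(t)}$ is sandwiched between diagonal factors) for $W_1$ and $W_d$. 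The entire invariance question thereby reduces to a structural condition on $E^{(t)}$.

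For the sufficiency direction, given $U,V$ with $U^TYX^TV$ and $V^TXX^TV$ diagonal, I would initialize $U_d^{(0)}=U$, $V_1^{(0)}=V$, pick any orthogonal $U_i=V_{i+1}$ for $1\le i\le d-1$ compatible with alignment, and choose arbitrary diagonal $\Sigma_i^{(0)}$. Then $P^{(0)}$, $Q^{(0)}$, and $E^{(0)}$ are diagonal, so the gradient formula above forces $\nabla_{W_i}L|_{t=0}=U_i(\mathrm{diag})V_i^T$ for every $i$. The middle $U_i,V_i$ are therefore preserved; for $W_1$, the gradient factors as $U_1(\mathrm{diag})V_1^{(0)T}$, so the usSVD of $W_1^{(1)}$ can be read off with left matrix $U_1$ and right matrix $V_1^{(0)}$, absorbing signs into the signed diagonal. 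The symmetric computation for $W_d$ gives $U_d^{(1)}=U_d^{(0)}$. Induction on $t$ yields $V_1^{(t)}=V$ and $U_d^{(t)}=U$ throughout training, so $P^{(t)},Q^{(t)}$ remain diagonal and alignment (in fact strong alignment) is an invariant. Interpolation in the limit follows from Proposition~\ref{prop: linear convergence}.

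For the necessity direction, suppose alignment is invariant for some initialization. The invariance of a middle layer at $t=0$ requires $U_i^T\nabla_{W_i}L|_{t=0}V_i$ to be diagonal so that $\Sigma_i^{(1)}$ is diagonal without disturbing $U_i,V_i$; inspecting the gradient formula, this is equivalent (under mild non-degeneracy of the flanking $\Sigma_j^{(0)}$) to $E^{(0)}$ being diagonal. Diagonality of $E^{(0)}$ then triggers exactly the outer-layer analysis from the sufficiency direction, giving $V_1^{(t)}=V_1^{(0)}$ and $U_d^{(t)}=U_d^{(0)}$ for all $t$, so that $P^{(t)}=P^{(0)}$ and $Q^{(t)}=Q^{(0)}$ are constant while $\Sigma^{(t)}$ continues to evolve. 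The invariance at $t=1$ then gives $P^{(0)}-\Sigma^{(1)}Q^{(0)}$ diagonal, and subtracting the $t=0$ condition yields $(\Sigma^{(1)}-\Sigma^{(0)})Q^{(0)}$ diagonal. Iterating across time steps and exploiting the symmetry $Q^{(0)}=Q^{(0)T}$ together with the pairwise identity $P^{(0)}_{jk}\Sigma^{(t)}_{kk}=P^{(0)}_{kj}\Sigma^{(t)}_{jj}$ valid for every $t$ forces $Q^{(0)}$ to be diagonal; then $P^{(0)}=E^{(0)}+\Sigma^{(0)}Q^{(0)}$ is diagonal as a sum of diagonals, and taking $U=U_d^{(0)}$, $V=V_1^{(0)}$ proves the condition.

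The hardest step is the final separation of $P^{(0)}$ and $Q^{(0)}$ from the composite condition that only $E^{(0)}$ is diagonal: if some $E^{(0)}_{jj}=0$, the corresponding diagonal entry of $\Sigma^{(t)}$ remains constant across all $t$ and naive subtraction of invariance conditions at two times does not constrain row $j$ of $Q^{(0)}$. Resolving this requires pairing indices via the symmetry of $Q^{(0)}$ and, in the fully degenerate case where $E^{(0)}_{jj}=E^{(0)}_{kk}=0$ while $Q^{(0)}_{jk}\ne 0$, replacing the initial $U_d^{(0)},V_1^{(0)}$ by a further rotation that simultaneously diagonalizes the corresponding blocks of $Q^{(0)}$ and $P^{(0)}$; such a rotation exists because within each degenerate block of $\Sigma^{(0)}$ the ratios $P^{(0)}_{jk}/Q^{(0)}_{jk}$ must collapse to the common value $\Sigma^{(0)}_{jj}$, so the two matrices can be co-diagonalized in that block. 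Once this separation is achieved, the rest of the argument is a direct manipulation of the gradient formula established in the first paragraph.
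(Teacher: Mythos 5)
Your proposal follows essentially the same route as the paper: substitute the aligned decomposition into the gradient descent update, reduce invariance to diagonality of $U_d^TYX^TV_1 - \Sigma_d\cdots\Sigma_1 V_1^TXX^TV_1$, show that the outer factors $U_d, V_1$ are fixed (so alignment upgrades to strong alignment), and difference the condition across time steps to separate the diagonality of $V^TXX^TV$ from that of $U^TYX^TV$, with sufficiency by the corresponding initialization and interpolation via the convergence analysis. The only divergence is your treatment of the degenerate case where some singular values never update: the paper dismisses it by assuming the initialization avoids optimal singular values (``satisfied with probability 1''), whereas your explicit block-rotation repair is delicate --- when $\Sigma^{(0)}_{jj}\neq\Sigma^{(0)}_{kk}$ the corresponding $2\times 2$ block of $P^{(0)}$ is non-symmetric and cannot in general be diagonalized by the same rotation that diagonalizes the $Q^{(0)}$ block --- though since the invariance definition is existential over initializations, the paper's genericity route suffices.
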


The full proof of this result is presented in Appendices \ref{sec: appendix outline}-\ref{sec: completing the proof};  here, we provide a proof sketch.  
\vspace{-0.2cm}
\begin{proof}[Proof Sketch]
The proof essentially follows by induction.  For the base case, we initialize the layers $\{W_i\}_{i=1}^{n}$ to satisfy the  conditions for alignment given in Definition \ref{def: Invariance of Alignment}.  Assuming that these conditions hold at gradient descent step $t$, we prove that they hold at step $t + 1$.

After substituting the alignment conditions into the gradient descent update equation for the squared loss at step $t+1$ and cancelling terms, we obtain that alignment is an invariant of training~if~and~only~if
\begin{equation}
    \label{eq: Condition for alignment in fully connected matrices}
    {U_d^{(t)}}^T \displaystyle\sum\limits_{k=1}^{n} (y^{(k)} -  f(x^{(k)})) {x^{(k)}}^T V_{1}^{(t)}
\end{equation}
is a diagonal matrix.  By considering the update for $W_1^{(t)}$ and $W_d^{(t)}$, one sees that alignment implies strong alignment and so $U_d, V_1$ are also invariant across updates.  Thus, let $U_d = U$ and $V_1 = V$.  By expanding $f(x^{(k)})$ using \eqref{eq: aligned_network}, and considering the update across multiple timesteps, we obtain that the matrix in \eqref{eq: Condition for alignment in fully connected matrices} is diagonal if and only if $U^TYX^TV$ and $V^TXX^TV$ are diagonal. To complete the proof, we show in Appendix D that under strong alignment, gradient descent converges to a solution~with~zero~training~error. 
\end{proof}


Theorem~\ref{thm: Alignment for fully connected networks with multi-dimensional output} implies that invariance of alignment throughout training holds only for special classes of problems. In particular, the above implies that alignment is an invariant of training when $X$ and $Y$ have the same right singular vectors, a very special condition on the data. Note that this corresponds to the $\epsilon=0$ data condition with the initialization considered in \cite{ImplictRegularizationBach}.
In Section \ref{sec: Experiments}, we also provide empirical support showing that alignment is not an invariant of training for important tasks that violate the data condition presented here, such as multi-class classification.

\subsection{Classes of Problems with Alignment}

We next discuss classes of problems for which alignment is an invariant of training.\\\\
\textbf{Autoencoding:}  In the case when $X = Y$, it holds that  $U^TYX^TV = U^TXX^TV$.  Taking $U = V$ to be the left singular vectors of $X$ satisfies the conditions of Theorem \ref{thm: Alignment for fully connected networks with multi-dimensional output}.\\\\
\textbf{Matrix Factorization and Inversion}: In the case of matrix factorization, we have that $X = I$. Hence taking $U$ and $V$ to be the left and right singular vectors of $Y$ respectively satisfies the conditions of Theorem \ref{thm: Alignment for fully connected networks with multi-dimensional output}.  For matrix inversion, we have that $Y = I$ and we proceed analogously. \\\\
\textbf{Matrix Sensing.}  Given pairs of observations $\{(M_i, y_i)\}_{i=1}^{n}$ with $M_i\in \mathbb{R}^{k \times k}$ and $y_i = \text{Tr}(M_i^T X^*)$ for some unobserved matrix $X^* \in \mathbb{R}^{k \times k}$,  gradient descent on $\{W_i\}_{i=1}^n$ is used to solve
\begin{align*}
    \arg\min_{\{W_i\} }\frac{1}{2n} \sum\limits_{i=1}^{n}\| y_i - \text{Tr}(M_i^T W_d W_{d-1} \ldots W_1) \|_2^2.
\end{align*}
Implicit regularization of linear networks in the matrix sensing setting has been analyzed extensively~\cite{DeepMatrixFactorization, BalancednessJasonLee, MatrixFactorizationNIPS2017,  MatrixSensing}. Theorem \ref{thm: Alignment for fully connected networks with multi-dimensional output} shows that alignment is an invariant of training for this problem if and only if $M_i = U \Lambda_i V^T$ for all $i \in [n]$, and $U_d = U, V_1 = V$. \\\\
\textbf{1-dimensional Outputs.}  In the following proposition, we show that alignment is an invariant of training for fully connected networks with 1-dimensional outputs for any real-valued loss function provided that gradient descent converges to zero training error. The proof is given in Appendix \ref{alignment 1d proof}.

\begin{prop}
    \label{prop: Alignment for 1 dimensional outputs}
    Alignment is an invariant of training for any linear fully connected network $f: \mathbb{R}^{k_0} \rightarrow \mathbb{R}$, any real-valued loss function, and data $(X,Y)\in \mathbb{R}^{k_0 \times n} \times \mathbb{R}^{1 \times n}$ for which gradient descent minimizes the loss to zero.
\end{prop}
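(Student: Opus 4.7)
The plan is to exhibit an aligned rank-one initialization and argue by induction on $t$ that gradient descent preserves this structure, so that all three conditions of Definition~\ref{def: Invariance of Alignment} hold at every step; the zero-training-error requirement is supplied directly by the hypothesis of the proposition. Concretely, I would fix unit vectors $w_i \in \mathbb{R}^{k_i}$ for $1 \leq i \leq d-1$, a unit vector $v \in \mathbb{R}^{k_0}$, and nonzero scalars $\sigma_i^{(0)}$, and set $W_1^{(0)} = \sigma_1^{(0)} w_1 v^T$, $W_i^{(0)} = \sigma_i^{(0)} w_i w_{i-1}^T$ for $2 \leq i \leq d-1$, and $W_d^{(0)} = \sigma_d^{(0)} w_{d-1}^T$. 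Every layer is then rank one with $U_i = V_{i+1} = w_i$, so Definition~\ref{def: Alignment for 2 matrices} holds at $t=0$ with $U_1 = w_1$, $V_d = w_{d-1}$, and $U_d = 1$ fixed.

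The inductive step exploits that for $k_d = 1$ the residual $r_j := \partial \ell / \partial f(x^{(j)})$ is a scalar. Assuming $W_1^{(t)} = \sigma_1^{(t)} w_1 (v^{(t)})^T$ and $W_i^{(t)} = \sigma_i^{(t)} w_i w_{i-1}^T$ for $i \geq 2$, direct computation using $w_k^T w_k = 1$ yields
\[
 W_d^{(t)} \cdots W_{i+1}^{(t)} = \Bigl(\prod_{k=i+1}^d \sigma_k^{(t)}\Bigr) w_i^T, \qquad
 W_{i-1}^{(t)} \cdots W_1^{(t)} x^{(j)} = \Bigl(\prod_{k=1}^{i-1} \sigma_k^{(t)}\Bigr)\bigl((v^{(t)})^T x^{(j)}\bigr)\, w_{i-1}.
\]
Substituting into $\nabla_{W_i} L = \tfrac{1}{n}\sum_j (W_d^{(t)}\cdots W_{i+1}^{(t)})^T r_j (W_{i-1}^{(t)}\cdots W_1^{(t)} x^{(j)})^T$ collapses the gradient to a scalar multiple of $w_i w_{i-1}^T$ when $2 \leq i \leq d-1$, to a scalar multiple of $w_1 \bar r^T$ (with $\bar r = \tfrac{1}{n}\sum_j r_j x^{(j)}$) when $i=1$, and to a scalar multiple of $w_{d-1}^T$ when $i=d$. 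Subtracting $\gamma$ times these gradients from $W_i^{(t)}$ preserves the shared directions $w_i$, updates only the singular values $\sigma_i^{(t+1)}$ together with the right singular vector of $W_1$, and therefore satisfies conditions (a)--(c) at step $t+1$.

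The only real thing to track is the bookkeeping: one must verify that a scalar gradient back-propagated through aligned rank-one layers can introduce no component transverse to the $w_i$ directions. Once this is observed the invariance is automatic, and convergence to zero loss is supplied by hypothesis (rank-one parameterization loses no expressive power here since every linear map $\mathbb{R}^{k_0} \to \mathbb{R}$ is rank one). This also illuminates why the argument fails for $k_d \geq 2$: the residual $r_j$ then becomes a vector which may point outside the aligned directions, and forcing it to remain aligned is exactly the restrictive data condition that Theorem~\ref{thm: Alignment for fully connected networks with multi-dimensional output} requires via diagonality of $U^T Y X^T V$.
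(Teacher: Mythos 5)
Your induction on the rank-one structure is correct and is essentially the same argument the paper gives: the paper substitutes the usSVDs into the gradient descent update and observes that the resulting matrix $\prod_{j>i}{\Sigma_j}^T\bigl(U_d^T\sum_k \frac{\partial\ell}{\partial f}{x^{(k)}}^TV_1\bigr)\prod_{j<i}{\Sigma_j}^T$ is a diagonal matrix with a single nonzero entry for $2\le i\le d$, and rank one with only the top row nonzero for $i=1$; your explicit outer-product bookkeeping with the $w_i$ is the same computation in different notation, and you correctly identify that only $V_1$ moves while $U_1,\dots,U_d$ and $V_2,\dots,V_d$ stay fixed.

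There is, however, one genuine gap. Definition~\ref{def: Invariance of Alignment} requires that the \emph{same} initialization which stays aligned also drives the training error to zero, and your proposal discharges this by appeal to the hypothesis that ``gradient descent minimizes the loss to zero.'' But that hypothesis is a condition on the data, not on your particular rank-one aligned initialization: restricting to the rank-one manifold introduces additional stationary points (all $\sigma_1(W_i)=0$, or $v^{(t)}$ orthogonal to $\sum_j r_j x^{(j)}$) at which gradient descent could stall with nonzero loss, so convergence from a generic initialization does not transfer for free. The paper treats this as the substantive remaining step: it characterizes the stationary points of the restricted dynamics, shows that updates to $v^{(t)}$ stay in the span of the data so the orthogonality stationary point cannot be reached from a suitable start, and argues (under a balanced initialization of the singular values and an assumption that the all-zero saddle is avoided) that the loss converges to zero. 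Your closing remark that rank-one parameterization loses no expressive power is the right observation about \emph{representability}, but it does not by itself yield convergence of the constrained dynamics; you would need to add the stationary-point analysis to close the argument.
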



\subsection{Consequences of Alignment}
\label{subsec: Consequences of Alignment}
\vspace{-0.1cm}
We next discuss various consequences of the invariance of alignment for the analysis of  training. Our explicit characterization of alignment as an invariant is significant as it allows us to greatly simplify the convergence analysis of gradient descent, which is a main goal of defining an invariant of training.

The following corollary (proof in Appendix~\ref{sec: singular update proof}) follows from the proof of Theorem \ref{thm: Alignment for fully connected networks with multi-dimensional output}, and shows that under alignment the gradient descent update rule is simplified significantly.
\begin{corollary}
    \label{cor:singular-update}
    Let $r = \min(k_0, k_1, \ldots, k_d) > 1$ and let the top left $r \times r$ submatrix of $U^TY X^TV$ be $\Lambda'$ and that of $V^TX X^TV$ be $\Lambda$.  Under the invariance of strong alignment  (i.e., when $\Lambda'$ and $\Lambda$ are diagonal), we can express the partial derivative with respect to $W_i$ as follows:
    \begin{align}
    \label{eq: gradient formula}
        \frac{\partial L}{\partial W_i} = -\frac{1}{n}U_i \left(\prod\limits_{j={i+1}}^{d}{\Sigma_j}^T (U^T YX^TV_1 - \Sigma_d\cdots\Sigma_1V^TXX^T V) \prod\limits_{j={1}}^{i-1}{\Sigma_j}^T\right)V_i^T.
    \end{align}
    As a result, gradient descent only updates the first $r$ values of $\Sigma_i$. Let ${\Sigma'}_i^{(t)}$ be the top left $r \times r$ matrix of $\Sigma_i^{(t)}$. The updates are then given by:
        \vspace{-0.1cm}
    \begin{align}
    \label{eq: Updating diagonal elements under alignment}
    {\Sigma'}_i^{(t+1)} = {\Sigma'_i}^{(t)} + \frac{\gamma}{n} \prod\limits_{j = 1}^d {{\Sigma'}_j^{(t)}} (\Lambda' - \prod_{j=1}^{d}{\Sigma'}_j^{(t)} \Lambda).  
    \vspace{-1.2cm}
    \end{align}
    The other entries of $\Sigma_i^{(t+1)}$ are not updated.   
\end{corollary}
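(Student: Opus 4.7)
The plan is to derive equation~\eqref{eq: gradient formula} directly from the standard backpropagation formula for the squared loss, then exploit the alignment identities $U_j = V_{j+1}$ (for $j \in [d-1]$), $U_d = U$, and $V_1 = V$ to collapse the inner products of orthonormal factors into two telescoping cancellations. The singular-value update~\eqref{eq: Updating diagonal elements under alignment} will then follow because every remaining nontrivial factor is supported on the top-left $r\times r$ block, where the factors are diagonal and hence commute.

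The first step is to write
\[
    \frac{\partial L}{\partial W_i}
    = -\frac{1}{n}\,\bigl(W_d\cdots W_{i+1}\bigr)^{T}\bigl(Y - W_d\cdots W_1 X\bigr)\,X^T\bigl(W_{i-1}\cdots W_1\bigr)^{T},
\]
using the convention that empty products (arising when $i=1$ or $i=d$) equal the identity, and then apply the telescoping identity $W_a W_{a-1}\cdots W_b = U_a\Sigma_a\Sigma_{a-1}\cdots\Sigma_b V_b^T$ for $1 \le b \le a \le d$, which is immediate from $V_{j+1}^T U_j = I$. Taking $(a,b)=(d,1)$ gives $W_d\cdots W_1 X = U\,\Sigma_d\cdots\Sigma_1\,V^T X$, and taking $(a,b)=(d,i+1)$ and $(a,b)=(i-1,1)$, together with $V_{i+1}=U_i$ and $U_{i-1}=V_i$ at the outermost positions, gives the flanking factors $U_i\bigl(\prod_{j=i+1}^d \Sigma_j^T\bigr)U^T$ and $V\bigl(\prod_{j=1}^{i-1}\Sigma_j^T\bigr) V_i^T$. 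Substituting these, using $U^T U = I$ to absorb the interior $U^T$ and $V$ into the residual, and recalling $V_1 = V$ yields~\eqref{eq: gradient formula}.

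For the update rule, observe that each rectangular diagonal $\Sigma_j$ is supported on the top-left $r\times r$ block, as is $\Sigma_d\cdots\Sigma_1$. By the strong-alignment hypothesis, $\Lambda'$ and $\Lambda$ are diagonal, so the entire bracketed factor in~\eqref{eq: gradient formula} vanishes outside the top-left $r\times r$ block and is diagonal inside it (as a product of commuting $r\times r$ diagonals). The gradient-descent step $W_i^{(t+1)} = W_i^{(t)} - \gamma\,\partial L/\partial W_i$ then peels off the invariant $U_i$ and $V_i^T$, producing an additive diagonal update to $\Sigma_i^{(t)}$ confined to the top-left $r\times r$ block. Collapsing the product of diagonals via commutativity gives~\eqref{eq: Updating diagonal elements under alignment}.

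The main obstacle is bookkeeping rather than conceptual: one must handle the boundary cases $i\in\{1,d\}$ where one of the flanking products is empty, and carefully track the rectangular dimensions so that the telescoping identity and the restriction to the top-left $r\times r$ block line up correctly. Once $V_{j+1}^T U_j = I$ is applied systematically, everything reduces to routine diagonal-matrix algebra.
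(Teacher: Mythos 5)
Your derivation of \eqref{eq: gradient formula} is correct and follows the same route as the paper: the paper's proof simply cites the computation in its Lemma~2 (the rectangular version of Theorem~1), which is exactly the substitution of the usSVDs into the squared-loss gradient followed by the telescoping cancellation $V_{j+1}^T U_j = I$ that you carry out explicitly.

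The step that does not hold as written is the claim that ``each rectangular diagonal $\Sigma_j$ is supported on the top-left $r\times r$ block.'' A layer $\Sigma_j \in \mathbb{R}^{k_j\times k_{j-1}}$ can have up to $\min(k_j,k_{j-1})$ nonzero diagonal entries, which exceeds $r=\min(k_0,\dots,k_d)$ whenever the global minimum width is attained at other indices (e.g.\ $k_0=k_3=3$, $k_1=k_2=10$ gives $r=3$ but allows $\Sigma_2$ of rank $10$). Consequently the two flanking products $\prod_{j>i}\Sigma_j^T$ and $\prod_{j<i}\Sigma_j^T$ need not \emph{both} annihilate everything outside the top-left $r\times r$ block --- only the one whose index range contains a minimizing $k_j$ is guaranteed to, as is the full product $\Sigma_d\cdots\Sigma_1$. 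Because of this, diagonality of $\Lambda$ and $\Lambda'$ alone does not suffice: to kill the entries of the bracketed factor with one index $\le r$ and the other $>r$ you also need the off-diagonal blocks of $U^TYX^TV$ and $V^TXX^TV$ to vanish. That vanishing is part of the block form supplied by the paper's Lemma~2 characterization of strong alignment, and it is what the hypothesis ``under the invariance of strong alignment'' actually provides; you should invoke it rather than the per-layer support claim. In the square case $k_0=\cdots=k_d$ of Theorem~1 the issue disappears and your argument is complete as stated.
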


We can use this corollary to provide an explicit learning rate under which gradient descent converges linearly to a global minimum. The proof of the following proposition is given in Appendix \ref{linear convergence proof}.

\begin{prop}
    \label{prop: linear convergence}
    For $k \in [r]$, let $\sigma_k(W_i)$ denote the $k$th entry of $\Sigma_i$ in the usSVD of $W_i$, and let $\lambda_k$, $\lambda'_k$ denote the $k$th entries of $\Lambda$, $\Lambda'$ respectively. Under the conditions of Corollary~\ref{cor:singular-update} and assuming that $\sigma_k(W_i^{(0)}) > 0\,$ and $\prod\limits_{i=1}^{d}\sigma_k(W_i^{(0)})  < \frac{\lambda'_k}{\lambda_k}\,$ for all $k \in [r]$, if the learning rate satisfies $\gamma \le  \frac{n \ln 2}{d} \cdot \min_{k}\frac{ \sigma_k(W_i^{(0)})^2\lambda_k}{\lambda'^2_k}$ then
    gradient descent only updates the top $r$ singular values of the solution and converges linearly to the global minimum.
\end{prop}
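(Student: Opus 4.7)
The plan is to use Corollary~\ref{cor:singular-update} to reduce the multi-dimensional update to a family of decoupled scalar recurrences indexed by $k \in [r]$, then analyze each one with a Lyapunov-style induction. Because $\Lambda$ and $\Lambda'$ are diagonal under the invariance of strong alignment, each ${\Sigma'}_i^{(t)}$ stays diagonal and (\ref{eq: Updating diagonal elements under alignment}) splits into $r$ independent updates; fixing $k$, I will write $s_i^{(t)} := \sigma_k(W_i^{(t)})$, $p^{(t)} := \prod_{i=1}^d s_i^{(t)}$, and $\epsilon^{(t)} := \lambda'_k - \lambda_k p^{(t)}$, so the per-entry update becomes $s_i^{(t+1)} = s_i^{(t)}(1 + \alpha_i^{(t)})$ with $\alpha_i^{(t)} := \gamma p^{(t)} \epsilon^{(t)}/(n (s_i^{(t)})^2)$. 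Expanding the squared loss under the aligned factorization~\eqref{eq: aligned_network} and using that $\Lambda, \Lambda'$ are diagonal on the relevant $r\times r$ block gives
\[
L(t) - L^{*} \;=\; \frac{1}{2n}\sum_{k=1}^{r} \frac{(\epsilon_k^{(t)})^2}{\lambda_k},
\]
and since a zero-training-error solution exists by Definition~\ref{def: Invariance of Alignment}, $L^{*}=0$ and it suffices to prove that each $\epsilon_k^{(t)}$ decays geometrically.

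Next I will induct on $t$ to preserve the good regime $s_i^{(t)} > 0$ and $0 \le \epsilon^{(t)} \le \epsilon^{(0)}$. The base case is exactly the hypotheses $\sigma_k(W_i^{(0)}) > 0$ and $\prod_i \sigma_k(W_i^{(0)}) < \lambda'_k/\lambda_k$. Positivity of $s_i^{(t+1)}$ is immediate because $\alpha_i^{(t)} \ge 0$; the delicate point is $p^{(t+1)} \le \lambda'_k/\lambda_k$, i.e.\ $\epsilon^{(t+1)} \ge 0$. For this I combine $\prod_i(1 + \alpha_i) \le \exp(\sum_i \alpha_i)$ with the sharp estimate $e^x - 1 \le x/\ln 2$ on $x \in [0, \ln 2]$ to obtain
\[
p^{(t+1)} - p^{(t)} \;\le\; \frac{p^{(t)}\sum_i \alpha_i^{(t)}}{\ln 2} \;=\; \frac{\gamma (p^{(t)})^2 \epsilon^{(t)} A^{(t)}}{n \ln 2}, \qquad A^{(t)} := \sum_{i=1}^d \frac{1}{(s_i^{(t)})^2}.
\]
Monotonicity $s_i^{(t)} \ge s_i^{(0)}$ gives $A^{(t)} \le d/\min_i (s_i^{(0)})^2$, and with $p^{(t)} \le \lambda'_k/\lambda_k$ the stated learning rate $\gamma \le (n\ln 2/d)\min_k \sigma_k(W_i^{(0)})^2 \lambda_k/(\lambda'_k)^2$ simultaneously verifies $\sum_i \alpha_i^{(t)} \le \ln 2$ (so the $e^x-1$ estimate applies) and yields $p^{(t+1)} - p^{(t)} \le \epsilon^{(t)}/\lambda_k$, closing the induction. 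This is the main obstacle: threading the inequality $e^x - 1 \le x/\ln 2$ is exactly what pins down the constant $\ln 2$ in the hypothesized learning rate.

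For linear convergence I use the matching lower bound $\prod_i(1 + \alpha_i) \ge 1 + \sum_i \alpha_i$ together with the previous invariants to obtain
\[
\epsilon^{(t+1)} \;\le\; \epsilon^{(t)}\Bigl(1 - \frac{\gamma \lambda_k (p^{(t)})^2 A^{(t)}}{n}\Bigr).
\]
To convert this into a uniform geometric decay, I argue that the contraction factor is bounded away from $1$ along the whole trajectory: $p^{(t)} \ge p^{(0)} > 0$ by monotonicity, while each individual $s_i^{(t)}$ is bounded above because the upper bound $p^{(t)} \le \lambda'_k/\lambda_k$ and the lower bound $s_j^{(t)} \ge s_j^{(0)}$ for $j \ne i$ force $s_i^{(t)} \le (\lambda'_k/\lambda_k)/\prod_{j\ne i}s_j^{(0)}$; hence $A^{(t)}$ is bounded below by an explicit constant $A_{*} > 0$ depending only on the initialization and $\lambda'_k/\lambda_k$. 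Thus $\epsilon_k^{(t)} \le (1 - \mu_{*})^t\,\epsilon_k^{(0)}$ with $\mu_{*} := \gamma \lambda_k (p^{(0)})^2 A_{*}/n > 0$, and substituting into the loss identity of the first paragraph gives linear convergence of $L(t)$ to its global minimum. The claim that gradient descent updates only the top $r$ singular values is inherited directly from Corollary~\ref{cor:singular-update}.
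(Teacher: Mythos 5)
Your proposal is correct and follows essentially the same route as the paper's proof: decouple into scalar recurrences for the per-index products $S^{(t)}=\prod_i\sigma_i^{(t)}$ via Corollary~\ref{cor:singular-update}, inductively maintain $0<S^{(t)}\le\lambda_k'/\lambda_k$ using the $\ln 2$ learning-rate condition through an exponential bound on the product update, and then extract a uniform geometric contraction of $\lambda_k'/\lambda_k - S^{(t)}$. The only differences are in technical execution (the paper expands $\prod_i(\sigma_i+\Delta_i)$ over subsets and uses AM-GM with $S^{(t)}\ge S^{(0)}$ for the contraction constant, whereas you sandwich $\prod_i(1+\alpha_i)$ between $1+\sum_i\alpha_i$ and $e^{\sum_i\alpha_i}$ and bound $A^{(t)}$ below via per-layer upper bounds, also adding the explicit loss identity $L(t)=\frac{1}{2n}\sum_k(\epsilon_k^{(t)})^2/\lambda_k$), and these yield the same conclusion.
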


\subsection{Alignment in the Limit of Training}
\label{sec: Alignment in Limit}

While the previous section was primarily concerned with the invariance properties of alignment, we briefly comment on understanding whether alignment will occur in the limit of training. We first present the following proposition, which states that for a 2-layer network, an aligned solution achieves the minimum $\ell_2$-norm. The proof is given in Appendix \ref{min norm proof}.
\begin{prop}
\label{prop: aligned min norm}
Let $W_1, W_2$ be matrices such that $W_2W_1 = P$, for a fixed matrix $P$. Then, $\|W_1\|^2_F + \|W_2\|^2_F$ achieves a minimum at the solution where $W_1$ and $W_2$ are aligned \emph{and} 0-balanced, i.e. there exist usSVD's $W_1 = W\Sigma V^T, W_2 = U\Sigma W^T$.
\end{prop}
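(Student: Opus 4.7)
The plan is to identify the minimum of $\|W_1\|_F^2 + \|W_2\|_F^2$ subject to $W_2W_1 = P$ as $2\|P\|_*$ (twice the nuclear norm of $P$), show that the aligned, $0$-balanced factorization attains this value, and then establish the matching lower bound $\|W_1\|_F^2 + \|W_2\|_F^2 \ge 2\|P\|_*$ for any feasible $W_1, W_2$. First I would evaluate the objective at the aligned, $0$-balanced construction. Substituting $W_1 = W\Sigma V^T$ and $W_2 = U\Sigma W^T$ into $W_2W_1 = P$ gives $P = U\Sigma W^T W\Sigma V^T = U\Sigma^2 V^T$, since $W^T W = I$. Absorbing signs into the usSVD convention, we may take $\Sigma$ to have nonnegative entries, so $U\Sigma^2 V^T$ is the ordinary SVD of $P$ and the entries of $\Sigma^2$ are the singular values of $P$. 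Orthonormal invariance of the Frobenius norm then yields $\|W_1\|_F^2 = \|W_2\|_F^2 = \|\Sigma\|_F^2$, so the objective equals $2\|\Sigma\|_F^2 = 2\sum_k \sigma_k(P) = 2\|P\|_*$.

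Next I would prove the lower bound $\|W_1\|_F^2 + \|W_2\|_F^2 \ge 2\|P\|_*$ for an arbitrary factorization. AM--GM gives $\|W_1\|_F^2 + \|W_2\|_F^2 \ge 2\|W_1\|_F\|W_2\|_F$, so it suffices to establish the matrix-norm inequality $\|P\|_* = \|W_2W_1\|_* \le \|W_1\|_F\|W_2\|_F$. I would use the dual characterization $\|P\|_* = \sup_{\|X\|_2 \le 1} \Tr(X^T P)$. For any such $X$, the cyclic property of the trace gives $\Tr(X^T W_2 W_1) = \Tr(W_1 X^T W_2)$, and combining Cauchy--Schwarz with the sub-multiplicative bound $\|X^T W_2\|_F \le \|X\|_2 \|W_2\|_F \le \|W_2\|_F$ yields $|\Tr(W_1 X^T W_2)| \le \|W_1\|_F\|W_2\|_F$. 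Taking the supremum over $X$ completes the bound.

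Combining the two steps, every feasible factorization satisfies $\|W_1\|_F^2 + \|W_2\|_F^2 \ge 2\|P\|_*$, and this lower bound is attained by the aligned, $0$-balanced construction, which is therefore a minimizer. The main obstacle is the nuclear-norm inequality $\|AB\|_* \le \|A\|_F\|B\|_F$, which is less standard than its Frobenius analogue; the trace-duality argument above handles it cleanly, and the remaining steps are routine.
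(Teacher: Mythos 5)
Your proof is correct, and it shares the paper's overall architecture (show the aligned, $0$-balanced factorization attains $2\|P\|_*$, then show $2\|P\|_*$ is a lower bound over all feasible factorizations) but proves the lower bound by a genuinely different route. The paper invokes the per-index singular value inequality $2\sigma_i(AB^*) \le \sigma_i(A^*A + B^*B)$ from Horn--Johnson with $A = W_2$, $B = W_1^T$, sums over $i$, and then uses the triangle inequality for the trace norm to get $2\sum_i\sigma_i(P) \le \|W_2^TW_2\|_1 + \|W_1W_1^T\|_1 = \|W_2\|_F^2 + \|W_1\|_F^2$. You instead split the bound into the scalar AM--GM step $\|W_1\|_F^2+\|W_2\|_F^2 \ge 2\|W_1\|_F\|W_2\|_F$ plus the Schatten--H\"older inequality $\|W_2W_1\|_* \le \|W_2\|_F\|W_1\|_F$, which you derive from scratch via trace duality, cyclicity, Cauchy--Schwarz for the Frobenius inner product, and $\|X^TW_2\|_F \le \|X\|_2\|W_2\|_F$. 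The paper's inequality is strictly stronger information (it holds singular value by singular value), but for this proposition both suffice; your version buys self-containedness, since the only external fact you use is the dual characterization of the nuclear norm. On the achievability side your argument is also slightly more careful: you start from the form $W_1 = W\Sigma V^T$, $W_2 = U\Sigma W^T$ given in the statement and deduce $P = U\Sigma^2V^T$, whereas the paper's displayed construction ($W_1 = W\Sigma^{1/2}U^T$, $W_2 = U\Sigma^{1/2}V^T$) appears to contain a typo, since as written $W_2W_1 \ne P$ in general.
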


It has also been shown that SGD in the overparameterized setting for a network initialized close to zero will converge to a solution close in $\ell_2$-norm to the minimum $\ell_2$-norm solution~\cite{AzizanMirrorDescent}. Therefore we expect such networks to converge to a solution which is close to an aligned solution. 

\section{Alignment Under General Layer Structure}
\label{sec: Alignment in Networks with Layer Structures}
In the previous section, we analyzed fully connected networks, where parameters of each weight matrix are optimized independently.  The most commonly used deep learning models, however, rely on convolutional layers or layers with other forms of constraints.  In this section, we analyze alignment in the setting of linear networks with layer constraints.  In particular, we show that when the dimension of the subspace induced by the layer constraints is small compared to the number of training samples, alignment cannot happen, let alone be an invariant of training.  

\subsection{Linear Neural Networks with Layer Structure}

We start by setting up mathematical terminology to describe different layer structures.


\begin{definition}
\label{def_layer}
Let $\mathcal{S} \subset \mathbb{R}^{m \times n}$ be a linear subspace of matrices and let $\{A_i\}_{i=1}^{r}$ be an orthogonal\footnote{Orthogonality is w.r.t the inner product $\langle A, B \rangle = \text{Tr}(A^TB)$, or equivalently the dot product in $\mathbb{R}^{mn}$} basis for $\mathcal{S}$.
Layer $W_i$ has \textit{layer structure} $\mathcal{S}$ if $W_i\in\mathcal{S}$, i.e., there exist coefficients  $\{c_j^i\}_{j=1}^r \subset \mathbb{R}$ such that $W_i = \sum_{j=1}^{r}c_j^i A_j$,  and gradient descent operates on the $\{c_j^i\}_{i,j=1}^r$.   
\end{definition}
Definition~\ref{def_layer} encompasses layer structures commonly used in practice, such as:

\begin{itemize}
\vspace{-0.1cm}
\item\textit{Convolutional layers:} treating a $p\times p$ image as a vector in $\mathbb{R}^{p^2}$, a single $s \times s$ convolutional filter with stride 1 and padding $\frac{s - 1}{2}$ maps the image to another $p\times p$ image; this linear transformation can be represented as a matrix in $\mathbb{R}^{p^2 \times p^2}$ and the set of all such transformations forms an $s^2$-dimensional subspace. The parameters of the filter are coefficients of an orthogonal basis of this subspace, and hence performing gradient descent on the parameters is equivalent to optimizing over the basis coefficients; see Appendix~\ref{conv example} for an example.  
\vspace{-0.1cm}
\item\textit{Layers with Sparse Connections:} Consider a fixed connection pattern between layers such that the $j^{th}$ hidden unit in layer $i$ depends only on a subset of units in layer $i-1$. In this case, the subspace $\mathcal{S}$ consists of matrices where particular entries are forced to be zero corresponding to missing connections between features in consecutive layers.
\end{itemize}
\vspace{-0.1cm}

The following theorem provides, in closed-form, the gradient descent update rules for linear networks with layer structure. The proof is provided in Appendix~\ref{projected gd proof}.
\vspace{0.1cm}

\begin{theorem}
\label{thm:projected gd}
Performing gradient descent on the basis coefficients $\{c_j^i\}_{j=1}^r$ leads to the following weight matrix updates:
$$W_i^{(t+1)} = W_i^{(t)} - \eta\cdot\pi_\mathcal{S}\left( \frac{\partial l}{\partial W_i^{(t)}} \right),$$
where $\pi_\mathcal{S}$ denotes the projection operator onto $\mathcal{S}$.
\end{theorem}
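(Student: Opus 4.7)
The plan is to apply the chain rule to each coordinate $c^i_j$, reassemble the updated coefficients into a weight matrix, and recognize the resulting correction term as orthogonal projection onto $\mathcal{S}$.

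First I would compute the partial derivative $\partial l/\partial c^i_j$. Since $W_i = \sum_{j=1}^r c^i_j A_j$ and the basis elements $A_j$ do not depend on the coefficients, applying the chain rule with respect to the Frobenius inner product gives
$$\frac{\partial l}{\partial c^i_j} \;=\; \left\langle \frac{\partial l}{\partial W_i},\, \frac{\partial W_i}{\partial c^i_j}\right\rangle \;=\; \left\langle \frac{\partial l}{\partial W_i},\, A_j\right\rangle,$$
so the gradient descent update on the coefficients is $c^{i,(t+1)}_j = c^{i,(t)}_j - \eta \bigl\langle \partial l/\partial W_i^{(t)},\, A_j\bigr\rangle$.

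Next I would reassemble the weight matrix from the updated coefficients:
$$W_i^{(t+1)} \;=\; \sum_{j=1}^r c^{i,(t+1)}_j A_j \;=\; W_i^{(t)} - \eta \sum_{j=1}^r \left\langle \frac{\partial l}{\partial W_i^{(t)}},\, A_j\right\rangle A_j.$$
The remaining step is to identify the right-hand sum as $\pi_\mathcal{S}(\partial l/\partial W_i^{(t)})$. For an orthonormal basis of $\mathcal{S}$ the identity $\pi_\mathcal{S}(M) = \sum_j \langle M, A_j\rangle A_j$ is the standard formula for the orthogonal projection. Since Definition~\ref{def_layer} only requires an orthogonal basis, I would either take the $A_j$ to additionally be Frobenius-unit-norm (which can be arranged without loss of generality by $A_j \mapsto A_j/\|A_j\|_F$ and a compensating rescaling $c^i_j \mapsto c^i_j\|A_j\|_F$, leaving both $W_i$ and the induced dynamics on $W_i$ unchanged), or carry the normalization constants $\|A_j\|_F^{-2}$ through explicitly in the projection formula.

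The argument is essentially notational bookkeeping rather than a deep technical obstacle: its content is that, for the linear reparametrization $(c^i_j)\mapsto W_i = \sum_j c^i_j A_j$, the Jacobian of the embedding followed by its adjoint is exactly the orthogonal projection onto $\mathcal{S}$, and so the pullback of the ambient gradient under this linear map coincides with the projection of the ambient gradient. The only step demanding care is the basis normalization flagged above, so that the clean projection formula stated in the theorem holds verbatim rather than with extra constants.
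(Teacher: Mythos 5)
Your proposal is correct and follows essentially the same route as the paper's proof: chain rule on the coefficients, reassembly of the weight matrix, and identification of the correction term with $\sum_j \langle \partial l/\partial W_i, A_j\rangle A_j$. Your handling of the normalization issue is, if anything, slightly more explicit than the paper's, which simply defines $\pi_\mathcal{S}$ as the unnormalized sum and notes in a footnote that it equals the true orthogonal projection followed by a per-basis-direction rescaling when the $A_j$ are not unit-norm.
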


Theorem \ref{thm:projected gd} shows that gradient descent in networks with layer structure is equivalent to projected gradient descent\footnote{$\pi_\mathcal{S}$ is a projection in the traditional sense if and only if the $A_j$ form an orthonormal basis; otherwise, $\pi_\mathcal{S}$ is a projection onto $\mathcal{S}$ followed by an appropriate scaling in each basis direction.}.  Hence alignment is an invariant of training if and only if it holds throughout the projected gradient descent updates and leads to an aligned solution~with~zero~training~loss. 

\subsection{Necessary Condition for Alignment}

Motivated by the above characterization via projected gradient descent, we now show that for layer structures with constrained dimension, aligned networks generally cannot achieve zero training error under the squared loss, given sufficient data (Proposition \ref{prop: No alignment with constrained layers}). This is the case even when there is a solution with the desired layer structure that achieves zero training error.  Hence, if loss is minimized to zero, gradient descent must lead to a non-aligned network. 

We first show that for an aligned network which interpolates the data, the first and last layer must align with the pseudoinverse. The proof of this result is presented in Appendix~\ref{conv alignment lemma}.

\begin{prop}
    \label{prop: Interpolation with layer structure}
    Let $(X,Y) \in \mathbb{R}^{k_0 \times n} \times \mathbb{R}^{k_d \times n}$ such that $n \ge k_0$ and $X$ is full-rank (ensuring that $XX^T$ is invertible).
    If an aligned network $f = W_d W_{d-1} \ldots W_1$ achieves zero error under squared loss (i.e. if $Y = f(X)$),
    then $W_d^T$ aligns with $YX^T(XX^T)^{-1}$, which in turn aligns with $W_1^T$.
\end{prop}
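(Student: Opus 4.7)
The plan is to exploit the collapsed product formula~\eqref{eq: aligned_network} for aligned networks, identify the resulting matrix with $YX^T(XX^T)^{-1}$ via the full-rank hypothesis on $X$, and finally verify Definition~\ref{def: Alignment for 2 matrices} by transposing the usSVDs of $W_d$ and $W_1$. First, fix for each layer a usSVD $W_i = U_i \Sigma_i V_i^T$ satisfying the alignment condition $U_i = V_{i+1}$. By~\eqref{eq: aligned_network},
\begin{equation*}
W_d \cdots W_1 \;=\; U_d \, \widehat\Sigma \, V_1^T, \qquad \widehat\Sigma := \Sigma_d \Sigma_{d-1}\cdots \Sigma_1 \in \mathbb{R}^{k_d \times k_0}.
\end{equation*}
Since the product of (rectangular) diagonal matrices is diagonal, with $\widehat\Sigma_{jj} = \prod_{i=1}^{d}(\Sigma_i)_{jj}$ for $j \le \min_i k_i$ and zeros elsewhere, the expression $U_d \widehat\Sigma V_1^T$ is itself a valid usSVD of the full product.

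Next, I would invoke the zero-error hypothesis $Y = (W_d\cdots W_1)X$. Because $n \ge k_0$ and $X$ has full row rank, $XX^T$ is invertible, so right-multiplying by $X^T(XX^T)^{-1}$ gives $W_d \cdots W_1 = YX^T(XX^T)^{-1}$. Combined with the previous step, we obtain the explicit usSVD
\begin{equation*}
YX^T(XX^T)^{-1} \;=\; U_d \, \widehat\Sigma \, V_1^T,
\end{equation*}
whose left singular vector block is $U_d$ and whose right singular vector block is $V_1$.

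Finally, I would check alignment by transposing. The identity $W_d^T = V_d \Sigma_d^T U_d^T$ is a usSVD of $W_d^T$ whose right singular vector block is $U_d$; this matches the left singular vector block of the usSVD of $YX^T(XX^T)^{-1}$ just constructed, giving $W_d^T$ aligned with $YX^T(XX^T)^{-1}$ in the sense of Definition~\ref{def: Alignment for 2 matrices}. Symmetrically, $W_1^T = V_1 \Sigma_1^T U_1^T$ is a usSVD whose left singular vector block is $V_1$, matching the right singular vector block of $YX^T(XX^T)^{-1}$ and yielding the second alignment. The only point of care is tracking the non-square shapes of the $\Sigma_i$ so as to confirm that diagonality survives their product; beyond this mild bookkeeping I do not expect a substantive obstacle to arise, since both required alignments reduce to recognizing that the same orthogonal blocks $U_d$ and $V_1$ appear on both sides.
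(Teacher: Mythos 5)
Your proposal is correct and follows essentially the same route as the paper's (much terser) proof: both rewrite the aligned product as $U_d\bigl(\prod_i \Sigma_i\bigr)V_1^T$ via Equation~\eqref{eq: aligned_network}, identify it with $YX^T(XX^T)^{-1}$ using the full-rank hypothesis, and read off the alignment from the shared blocks $U_d$ and $V_1$. Your added bookkeeping on the rectangular diagonal product and the explicit transposition check against Definition~\ref{def: Alignment for 2 matrices} are details the paper leaves implicit, and they go through without issue.
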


The following result tells us that when a linear space $\mathcal{S}$
of matrices is sufficiently low-dimensional,
the set of matrices that align with an element of $\mathcal{S}$ has measure zero.
While we are mainly interested in the setting where $n\geq k$, we state it in full generality using $\binom{m}{2}=0$, when $m<2$.
\begin{prop}\label{prop: No alignment with constrained layers}
    Let $\mathcal{S}$ be an $r$-dimensional linear subspace of $k\times k$ matrices.
    If $r < k-1-\binom{k-n}{2}$
    then the set of matrices of size $k\times n$
    that can align with an element of $\mathcal{S}$, excluding scalar multiples of the identity,
    has Lebesgue measure zero.
\end{prop}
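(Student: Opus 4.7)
The plan is to realize the set of aligning $k\times n$ matrices as a constructible subset of $\mathbb{R}^{k\times n}$ of dimension strictly less than $kn$, hence of Lebesgue measure zero, via a fiber-dimension count through the map $\Phi\colon A\mapsto u_A$ sending $A$ to its top unit left singular vector $u_A\in S^{k-1}$.

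Unpacking the alignment condition, a generic $A$ with simple top singular value aligns with some $B\in\mathcal{S}$ precisely when there exist compatible usSVDs identifying the $k\times k$ orthogonal matrix $U_A$ of left singular vectors of $A$ (whose last $k-n$ columns are determined only up to an $O(k-n)$ action on $\ker A^T$) with $U_B$. In particular $u_A$ must coincide, up to sign, with one of the $k$ columns of $U_B$ for some choice of $B\in\mathcal{S}$ and of the $O(k-n)$-completion. For generic $B\in\mathcal{S}$ the spectrum of $BB^T$ is simple, so $U_B$ is determined up to sign and permutation; combined with the $O(k-n)$-slack on the $A$-side, the set $V\subseteq S^{k-1}$ of admissible top singular vectors has dimension at most $r+\binom{k-n}{2}$.

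Next, $\Phi$ is a smooth submersion on the open dense locus of $\mathbb{R}^{k\times n}$ where the top singular value of $A$ is simple, with fiber $\Phi^{-1}(u)$ of dimension $kn-(k-1)$, since the condition ``$u$ is a unit eigenvector of $AA^T$'' imposes $k-1$ independent polynomial equations. Hence the aligning matrices (away from a measure-zero exceptional locus) lie in $\Phi^{-1}(V)$, of dimension at most $\dim V+kn-(k-1)\le r+\binom{k-n}{2}+kn-(k-1)$. Under the hypothesis $r<k-1-\binom{k-n}{2}$, this is strictly less than $kn$, so the aligning set has codimension at least one in $\mathbb{R}^{k\times n}$ and hence Lebesgue measure zero. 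Scalar multiples of the identity are excluded because their SVD factorizations are entirely indeterminate, making them align trivially with every $B\in\mathcal{S}$, a degeneracy unrelated to a genuine constraint from $\mathcal{S}$.

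I expect the main obstacle to be the careful bookkeeping behind the claim $\dim V\le r+\binom{k-n}{2}$: one must argue that the $O(k-n)$-freedom in completing $U_A$ contributes at most $\binom{k-n}{2}$ extra dimensions to the set of admissible top singular vectors, and introduces no higher-dimensional components. This reduces to standard dimension arithmetic on the homogeneous space $O(k)/O(k-n)$, i.e.\ on the Stiefel manifold $V_n(\mathbb{R}^k)$. A secondary technicality is verifying that $\Phi$ is indeed a submersion away from the measure-zero locus of repeated top singular values, which follows from the implicit function theorem applied to the characteristic polynomial of $AA^T$.
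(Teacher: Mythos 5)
Your high-level strategy (exhibit the aligning matrices as a constructible set of dimension strictly less than $kn$ via a fiber count) is the same as the paper's, but the execution has a genuine gap centered on the bound $\dim V \le r + \binom{k-n}{2}$. First, the $O(k-n)$-slack in completing $U_A$ lives in the \emph{last} $k-n$ columns of $U_A$ and has no bearing on which unit vectors can arise as the top left singular vector, so it cannot be the source of the $\binom{k-n}{2}$ term. Second, and more seriously, $V$ must contain every vector that occurs as a column of $U_B$ for some non-scalar $B\in\mathcal{S}$, and elements of $\mathcal{S}$ with repeated singular values contribute positive-dimensional families of such columns: a singular value of multiplicity $m$ contributes an $(m-1)$-sphere of choices. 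Concretely, take $n = k \ge 4$ and $\mathcal{S} = \{w e_1^T : w \in W\}$ for a $(k-2)$-dimensional subspace $W \subseteq \mathbb{R}^k$, so $r = k-2 < k-1-\binom{0}{2}$ and the hypothesis of the proposition holds. Here $BB^T = ww^T$, so $U_B$ consists of $\pm w/\|w\|$ together with an \emph{arbitrary} orthonormal basis of $w^\perp$; since every unit vector is orthogonal to some nonzero $w\in W$ (as $\dim u^\perp + \dim W > k$), one gets $V = S^{k-1}$, and your estimate $\dim\Phi^{-1}(V) \le \dim V + kn-(k-1)$ degenerates to $kn$, yielding no conclusion. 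The proposition is nonetheless true in this example: the real constraint is that one of the $k$ determined left singular vectors of $A$ lies in $W$, which is a codimension-one condition --- but that condition is invisible to the top-singular-vector map $\Phi$. In short, tracking only $u_A$ throws away too much of the alignment constraint.

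The paper's proof repairs exactly this by working with the full frame: the first $n$ columns of $U_A$ form a point of the Stiefel manifold $\mathcal{O}(k,n)$ of dimension $\binom{k}{2}-\binom{k-n}{2}$, and the set of frames realizable from $U_B$ with $B\in\mathcal{S}$ is bounded by stratifying $\mathcal{S}$ according to the multiplicity partition $\lambda$ of the singular values, giving $r + \sum_i \binom{\lambda_i}{2}$ per stratum. Excluding $\lambda=(k)$ --- note the exclusion in the statement refers to scalar multiples of the identity \emph{in $\mathcal{S}$}, not among the $k\times n$ matrices, which is a point your last sentence gets backwards --- the worst case is $\lambda=(k-1,1)$, yielding $r+\binom{k-1}{2}$, and comparing this with $\binom{k}{2}-\binom{k-n}{2}$ produces exactly the stated threshold. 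To fix your argument you would need to replace $\Phi$ by the map to the $n$-frame of determined left singular vectors and redo the multiplicity bookkeeping on the $\mathcal{S}$ side, at which point you have reproduced the paper's proof.
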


The proof of Proposition~\ref{prop: No alignment with constrained layers} is provided in Appendix~\ref{conv alignment bound}.
Taken together, Propositions \ref{prop: Interpolation with layer structure} and \ref{prop: No alignment with constrained layers} directly imply Theorem~\ref{thm: No Alignment in constrained layers},
which states that alignment does not occur in linear networks with constrained layer structures given enough training samples.
To simplify notation, we let $k=k_0=\cdots =k_d$ and let all layers have the same structure, $\mathcal{S}$.  The statement can trivially be extended to the general setting without these assumptions.




\begin{theorem}
\label{thm: No Alignment in constrained layers}
    Let $n \ge k$, let $X,Y \in \mathbb{R}^{k\times n}$ be generic,
    let $\mathcal{S} \subset \mathbb{R}^{k \times k}$ be a linear subspace of dimension $r < k-1$,
    and let $W_1,\dots,W_d \in \mathcal{S}$
    such that at least one $W_i$ is not a scalar multiple of the identity\footnote{This is not a serious restriction;
modulo scalar multiplication, the only case in which such a network could achieve zero loss is autoencoding, in which case the latent space would be a scalar multiple of the data itself.}.
    If the network $f = W_d \cdots W_1$ satisfies $Y = f(X)$,
    then $f$ is not aligned.
\end{theorem}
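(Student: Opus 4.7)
The plan is to argue by contradiction, combining Propositions~\ref{prop: Interpolation with layer structure} and~\ref{prop: No alignment with constrained layers}. Assume $f = W_d\cdots W_1$ is aligned and $Y=f(X)$. Since $X$ is generic with $n\ge k$, the Gram matrix $XX^T$ is invertible, so $M := YX^T(XX^T)^{-1}\in\mathbb{R}^{k\times k}$ is well defined and, for generic $Y$, is itself a generic $k\times k$ matrix. Proposition~\ref{prop: Interpolation with layer structure} then gives that $M$ aligns with $W_1^T$ and that $W_d^T$ aligns with $M$; by taking transposes in Definition~\ref{def: Alignment for 2 matrices}, the latter is equivalent to $M^T$ aligning with $W_d$. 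For Proposition~\ref{prop: No alignment with constrained layers} applied with its ``$n$'' equal to $k$, the binomial term satisfies $\binom{k-n}{2}=0$, and the theorem's bound $r<k-1$ is exactly the required inequality. Hence the set of $k\times k$ matrices that align with a non-scalar element of $\mathcal{S}$, or of $\mathcal{S}^T$ (which has the same dimension $r$), has Lebesgue measure zero. If either $W_1$ or $W_d$ is non-scalar, this contradicts the genericity of $M$.

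\textbf{Handling the scalar loophole.} The remaining case is when both $W_1$ and $W_d$ are scalar multiples of the identity matrix, since $\alpha I = Q(\alpha I)Q^T$ is a valid usSVD for any orthonormal $Q$ and thus such a layer trivially aligns with everything. To deal with this I would reduce to a shorter subnetwork. Let $\mathcal{I} = \{i\in[d]: W_i\text{ is not a scalar multiple of the identity matrix}\}$; by hypothesis $|\mathcal{I}|\ge 1$, and writing $W_j = \alpha_j I$ for $j\notin\mathcal{I}$, each $\alpha_j$ must be nonzero since otherwise $f=0$ cannot interpolate a generic $Y$. Set $\beta = \prod_{j\notin \mathcal{I}}\alpha_j\ne 0$ and $f' = \prod_{j\in\mathcal{I}}W_j$ in the original order, so that $f'(X) = \beta^{-1}Y$ remains generic. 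The key point is that the $Q$-freedom at each $\alpha_j I$ layer causes the alignment equalities $U_i = V_{i+1}$ of $f$ to telescope across any maximal block of scalar-identity layers between consecutive $j,j'\in\mathcal{I}$, collapsing to $U_{W_j} = V_{W_{j'}}$. Thus $f'$ is itself aligned as a product of elements of $\mathcal{S}$, and its endpoints $W_{\min\mathcal{I}}$ and $W_{\max\mathcal{I}}$ are non-scalar by construction.

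\textbf{Closing the argument.} If $|\mathcal{I}|\ge 2$, applying Proposition~\ref{prop: Interpolation with layer structure} to the aligned subnetwork $f'$ shows that $\beta^{-1}M$ aligns with $W_{\min\mathcal{I}}^T$ and that $W_{\max\mathcal{I}}^T$ aligns with $\beta^{-1}M$; since both $W_{\min\mathcal{I}}$ and $W_{\max\mathcal{I}}$ are non-scalar, either statement places $M$ (or $M^T$) in the measure-zero set produced by Proposition~\ref{prop: No alignment with constrained layers}, contradicting the genericity of $M$. If instead $|\mathcal{I}|=1$ with $\mathcal{I}=\{i^*\}$, then $f = \beta W_{i^*}$ forces $\beta^{-1}M \in \mathcal{S}$; but since $\dim\mathcal{S} = r < k-1 < k^2$, this contradicts the genericity of $M$ as well. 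The main obstacle is justifying the telescoping step in the reduction: one must check that the scalar-identity $Q$-freedom genuinely transfers the alignment of $f$ onto the subsequence indexed by $\mathcal{I}$ rather than only appearing to. This is the same phenomenon that forces Proposition~\ref{prop: No alignment with constrained layers} to exclude scalar multiples of the identity, and once it is nailed down the remainder of the argument is routine generic-position bookkeeping on $M$.
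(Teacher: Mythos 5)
Your proof takes essentially the same route as the paper: the paper obtains Theorem~\ref{thm: No Alignment in constrained layers} by directly combining Propositions~\ref{prop: Interpolation with layer structure} and~\ref{prop: No alignment with constrained layers}, exactly as in your first paragraph (with $M=YX^T(XX^T)^{-1}$ generic and Proposition~\ref{prop: No alignment with constrained layers} applied with $n=k$). Your telescoping reduction for the case where $W_1$ and $W_d$ are both scalar multiples of the identity while only an interior layer is non-scalar is correct and is actually more careful than the paper's one-line derivation, since the theorem's hypothesis permits the non-scalar layer to be interior while Proposition~\ref{prop: Interpolation with layer structure} only constrains the outermost layers.
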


Theorem \ref{thm: No Alignment in constrained layers} is in contrast to fully connected networks (i.e., no layer constraints), where we showed that alignment is possible for particular classes of problems including autoencoders. An explicit example of a convolutional linear autoencoder, where alignment is ruled out by Theorem~\ref{thm: No Alignment in constrained layers}, is discussed next. 

\begin{example}
    If $m \ge 4$, then a generic dataset consisting of $n \ge m^2$ $m\times m$ images cannot be aligned by any convolutional linear autoencoder with filter size $3$, aside from the trivial case where all layers are scalar multiples of the identity.  This follows from letting $k = m^2$, $r = 9$ in Proposition~\ref{prop: No alignment with constrained layers}. 
\end{example}

\section{Empirical Support}
\label{sec: Experiments}

In this section, we provide experimental results to validate our theoretical findings in the settings where alignment is not an invariant of training\footnote{Hyperparameter settings are detailed in Appendix~\ref{experimental setup}}. We measure two properties: (1) invariance of alignment from initialization, and (2) alignment between layers.  Invariance of alignment at time $t$ is measured by the average dot product between corresponding columns of $U_i^{(t)}$ and $U_i^{(0)}$, as well as $V_i^{(t)}$ and $V_i^{(0)}$. Alignment is measured by the average dot product between corresponding columns of $U_i^{(t)}$ and $V_{i+1}^{(t)}$. For both, a value of 1 is perfect alignment / invariance.

\begin{figure*}[!t]
    \centering
    \begin{subfigure}[t]{.33 \textwidth}
        \centering
        \includegraphics[scale=0.3]{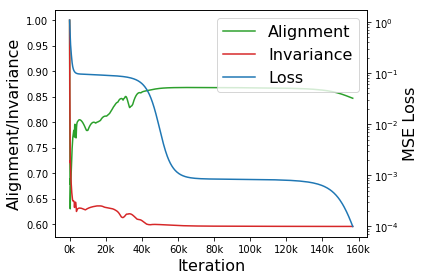}
        \caption{\centering Multi-dimensional regression on\newline random data with squared loss.}
    \end{subfigure}%
        \begin{subfigure}[t]{.33 \textwidth}
        \centering
        \includegraphics[scale=0.3]{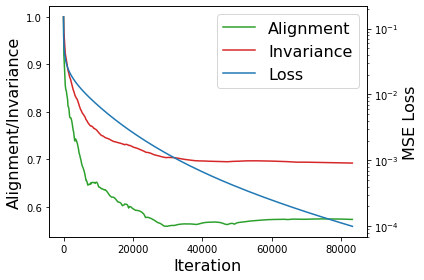}
        \caption{\centering Multi-class classification on\newline MNIST with squared loss.}
    \end{subfigure}%
    \begin{subfigure}[t]{.33 \textwidth}
        \centering
        \includegraphics[scale=0.3]{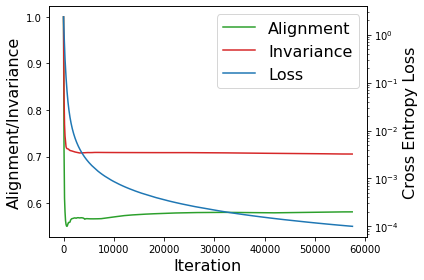}
        \caption{\centering Multi-class classification on\newline MNIST with cross entropy loss.}
    \end{subfigure}%
    \vspace{-0.1cm}
         \caption{Examples of fully connected networks with multi-dimensional outputs where alignment is not an invariant of training.} 
         \vspace{-0.3cm}
    \label{fig: Fully connected networks do not align}
\end{figure*}

\begin{figure*}[!t]
    \centering
    \begin{subfigure}[t]{.4 \textwidth}
        \centering
        \includegraphics[scale=0.3]{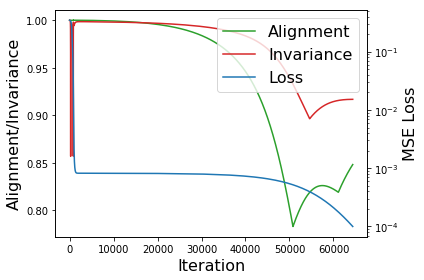}
        \vspace{-0.12cm}
        \caption{\centering Matrix factorization with layers\newline constrained to be Toeplitz matrices.}
    \end{subfigure}%
    \begin{subfigure}[t]{.4 \textwidth}
        \centering
        \includegraphics[scale=0.3]{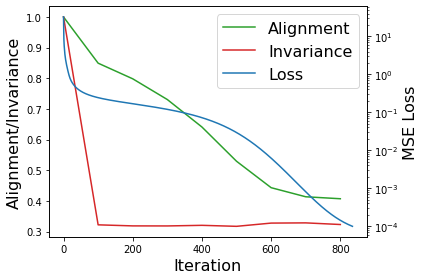}
        \vspace{-0.12cm}
        \caption{\centering Autoencoding a single MNIST example\newline using a convolutional network.}
    \end{subfigure}
        \vspace{-0.12cm}
        \caption{Examples of layer constrained networks, where alignment is not an invariant of training.}
    \label{fig: General networks do not align.}
    \vspace{-0.5cm}
\end{figure*}


We begin with examples demonstrating that alignment is not an invariant of training for fully connected networks when the data conditions of Theorem \ref{thm: Alignment for fully connected networks with multi-dimensional output} are violated. Figure \ref{fig: Fully connected networks do not align}a shows an example where alignment is not an invariant for multi-dimensional regression with random data under squared loss.  We used standard normal inputs $X \in \mathbb{R}^{9\times 9}$ and targets $Y \in \mathbb{R}^{9 \times 9}$, and a 2-hidden layer network initialized so that alignment holds at the start of training.  Since $X$ and $Y$ do not have the same right singular vectors, the conditions of Theorem \ref{thm: Alignment for fully connected networks with multi-dimensional output} are violated, and hence alignment is not an invariant of training, which is reflected in Figure~\ref{fig: Fully connected networks do not align}a. In Figures \ref{fig: Fully connected networks do not align}b and c, we show that alignment is also not an invariant in standard classification settings.  In particular, we trained a 2-hidden layer fully connected network to classify a linearly separable subset of $256$ MNIST examples under MSE loss and cross entropy loss. Figure \ref{fig: Fully connected networks do not align}b is consistent with the generalization of Theorem \ref{thm: Alignment for fully connected networks with multi-dimensional output} to non-square layers (see Appendix~\ref{proof strong alignment non square}). It is interesting that this result transfers to the case of cross entropy loss, at least empirically, suggesting that our theoretical results may also be relevant for other loss functions.

In networks with constrained layer structure, Theorem~\ref{thm: No Alignment in constrained layers} shows that given a sufficient amount of data, alignment cannot occur. We now present empirical evidence that alignment is not an invariant of training, even when the number of training samples is much smaller than the output dimension of the network or the dimensionality of the layer structure is much larger than the output dimension.

We provide an example from matrix factorization ($Y \in \mathbb{R}^{k \times k}$, $X = I$).  Here, $k = n$, so Theorem~\ref{thm: No Alignment in constrained layers} states that alignment is impossible when the linear structure has dimension $r  < k - 1$.  In Figure \ref{fig: General networks do not align.}a, we show that alignment does not occur also when $r \geq k - 1$.  In particular, alignment is not an invariant when training a 2-hidden layer Toeplitz network to factorize a $4 \times 4$ matrix.  Our network has $4$ hidden units per layer and thus $r = 7, k = 4, n = 4$. Even when $n < r < k$, we observe that alignment is not an invariant.  In Figure \ref{fig: General networks do not align.}b, we show that alignment is not an invariant of training when autoencoding a single MNIST example using a 2-hidden layer linear convolutional network (i.e. $n = 1, r = 9, k = 784$).  



\section{Discussion}
\vspace{-1mm}
We generalized the definition of alignment to linear networks with multi-dimensional outputs. We then analyzed the invariance properties of alignment, showing that under particular data conditions alignment is an invariant for fully connected networks, which allows us to significantly simply the convergence analysis of gradient descent.  We then extended our analysis of alignment to networks with constrained layer structures, such as convolutions, and proved that alignment cannot be an invariant of training in such networks when the dimension of the layer structure $r$ is small compared to the number of training samples $n$. 

While the simplification of gradient descent convergence analysis in the fully connected setting shows that our alignment definition is useful in understanding such networks, the fact that it does not generalize as an invariant to the constrained layer structure setting suggests that other approaches may be necessary to fully understand implicit regularization, such as studying how architecture influences the function classes that can be represented by deep networks~\cite{savarese2019infiniteWidth, IdentityCrisis, RadhaAutoencoders}.

\section*{Acknowledgements}
A. Radhakrishnan and C. Uhler thank the Simons Institute at UC Berkeley for hosting them during the summer 2019 program on ``Foundations of Deep Learning'', which facilitated this work. A.~Radhakrishnan and C.~Uhler were partially supported by the National Science Foundation (DMS-1651995), Office of Naval Research (N00014-17-1-2147 and N00014-18-1-2765), IBM, and a Simons Investigator Award  to C.~Uhler. Daniel Irving Bernstein was supported by an NSF Mathematical Sciences Postdoctoral Research Fellowship (DMS-1802902).  The Titan Xp used for this research was donated by the NVIDIA Corporation.

\bibliographystyle{plain}
\bibliography{references}

\clearpage

\section*{Appendix} \pdfbookmark[1]{Appendix}{sec:appendix} 
\renewcommand{\thesubsection}{\Alph{subsection}}

\subsection{Outline of Proof for Theorem \ref{thm: Alignment for fully connected networks with multi-dimensional output}, Corollary \ref{cor:singular-update}, and Proposition \ref{prop: linear convergence}}
\label{sec: appendix outline}
We now provide an outline of our results and proofs.  
\begin{enumerate}
    \item In Appendix \ref{sec: appendix outline}, we introduce Lemmas 1, 2, which will be used to prove Theorem \ref{thm: Alignment for fully connected networks with multi-dimensional output}.
    \item In Appendix \ref{sec: singular update proof}, we provide the proof of Corollary \ref{cor:singular-update} - the simplification of gradient descent under alignment - which relies on Lemma 2.
    \item In Appendix \ref{linear convergence proof}, we provide the proof of Proposition \ref{prop: linear convergence} - linear convergence under strong alignment - which relies on Corollary \ref{cor:singular-update}.
    \item In Appendix \ref{proof strong alignment non square}, we introduce Theorem~\ref{thm:strong alignment invariant non square}, which is a generalization of Theorem \ref{thm: Alignment for fully connected networks with multi-dimensional output} to fully connected networks with rectangular layers.  We use Lemma 2 and Proposition \ref{prop: linear convergence} to prove Theorem~\ref{thm:strong alignment invariant non square}.
    \item In Appendix \ref{sec: completing the proof}, we finally prove Theorem \ref{thm: Alignment for fully connected networks with multi-dimensional output}, which follows from Theorem \ref{thm:strong alignment invariant non square}. 
\end{enumerate}

Here, we present two lemmas that will be used extensively in our proofs.  










Clearly strong alignment being an invariant implies that alignment is an invariant.  Now we show that alignment implies strong alignment in the case of networks with square matrix layers. 

\begin{lemma}
Let $\{W_i\}_{i=1}^{d} \subset \mathbb{R}^{k \times k}$, where $d \ge 3$. If alignment is an invariant of training under the squared loss for network $f = W_dW_{d-1}\ldots W_1$ on data $(X, Y) \in \mathbb{R}^{k \times n} \times \mathbb{R}^{k \times n}$, then strong alignment is also invariant.  
\end{lemma}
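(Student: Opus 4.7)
The plan is to exploit condition (b)---the fact that the middle layers $W_i$ for $2 \le i \le d-1$ retain fixed singular vectors $U_i, V_i$ across all gradient steps---to derive a strong structural constraint on the residual matrix. This constraint then feeds into the updates for $W_1$ and $W_d$, forcing the ``free'' factors $V_1^{(t)}$ and $U_d^{(t)}$ to admit a choice of usSVD that does not change with $t$. The hypothesis $d \geq 3$ is used crucially, since it guarantees that at least one middle index exists.

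First, I would use the alignment relations $U_i = V_{i+1}$ (for $i\in[d-1]$) together with condition (b) to telescope adjacent layer products. Concretely, $W_d^{(t)}\cdots W_1^{(t)} = U_d^{(t)}\Sigma_d^{(t)}\cdots\Sigma_1^{(t)} (V_1^{(t)})^{T}$, and for any middle index $2\le i\le d-1$ the gradient collapses to
\begin{equation*}
\frac{\partial L}{\partial W_i^{(t)}} \;=\; -\frac{1}{n}\, U_i\, \Sigma_{i+1}^{(t)T}\cdots\Sigma_d^{(t)T}\, E^{(t)}\, \Sigma_1^{(t)T}\cdots\Sigma_{i-1}^{(t)T}\, V_i^{T},
\end{equation*}
where $E^{(t)} := U_d^{(t)T}(Y - f^{(t)}(X))X^{T} V_1^{(t)}$. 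Condition (b) demands that $W_i^{(t+1)} = U_i \Sigma_i^{(t+1)} V_i^{T}$ with $U_i,V_i$ unchanged, so premultiplying the gradient update by $U_i^{T}$ and postmultiplying by $V_i$ shows that the sandwich $\Sigma_{i+1}^{(t)T}\cdots\Sigma_d^{(t)T} E^{(t)}\Sigma_1^{(t)T}\cdots\Sigma_{i-1}^{(t)T}$ must be diagonal. Since each surrounding factor is diagonal, this forces $E^{(t)}$ to be diagonal on the support where the enveloping $\Sigma$-products are nonzero; the nonzero-gradient assumption at initialization, combined with varying the middle index $i$, lets us conclude that $E^{(t)}$ itself is diagonal.

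Now I substitute this diagonality into the update for $W_1$. An analogous telescoping gives
\begin{equation*}
\frac{\partial L}{\partial W_1^{(t)}} \;=\; -\frac{1}{n}\, U_1\, \Sigma_2^{(t)T}\cdots\Sigma_d^{(t)T}\, U_d^{(t)T}(Y - f^{(t)}(X))X^{T}.
\end{equation*}
Using $V_1^{(t)}(V_1^{(t)})^{T} = I$, I rewrite $U_d^{(t)T}(Y-f^{(t)}(X))X^{T} = E^{(t)}(V_1^{(t)})^{T}$, so the gradient factors as $-\tfrac{1}{n} U_1 D^{(t)} (V_1^{(t)})^{T}$ with $D^{(t)}$ diagonal. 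Consequently
\begin{equation*}
W_1^{(t+1)} \;=\; U_1\bigl(\Sigma_1^{(t)} + \tfrac{\gamma}{n}\Sigma_2^{(t)T}\cdots\Sigma_d^{(t)T} E^{(t)}\bigr)(V_1^{(t)})^{T}
\end{equation*}
is already presented as a usSVD with left factor $U_1$ and right factor $V_1^{(t)}$; one may therefore take $V_1^{(t+1)} = V_1^{(t)}$. The symmetric calculation on $W_d$---rewriting $(Y-f^{(t)}(X))X^{T}V_1^{(t)} = U_d^{(t)} E^{(t)}$---yields $W_d^{(t+1)} = U_d^{(t)}\bigl(\Sigma_d^{(t)} + \tfrac{\gamma}{n} E^{(t)}\Sigma_1^{(t)T}\cdots\Sigma_{d-1}^{(t)T}\bigr) V_d^{T}$, so $U_d^{(t+1)} = U_d^{(t)}$ is a valid choice.

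The main obstacle is conceptual rather than computational: a usSVD is not unique, so ``$V_1^{(t+1)}=V_1^{(t)}$'' has to be read as producing a consistent family of usSVDs realizing strong alignment rather than identifying uniquely determined objects. The technical subtlety is handling potentially vanishing entries of the $\Sigma_j^{(t)}$ when deducing that $E^{(t)}$ is diagonal from diagonality of the sandwich; here $d\ge 3$ gives the freedom to pick a middle index $i$ whose surrounding $\Sigma$-product entries are nonzero, and the standing nonzero-gradient assumption rules out the fully degenerate case.
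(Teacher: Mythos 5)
Your proposal is correct and follows essentially the same route as the paper's proof: substitute the aligned form into the gradient update, use invariance of the middle-layer singular vectors (which exist since $d\ge 3$) to force the residual matrix $E^{(t)}={U_d^{(t)}}^T(Y-f(X))X^TV_1^{(t)}$ to be diagonal, and then read off from the factored updates of $W_1$ and $W_d$ that $V_1$ and $U_d$ need not change. The only (immaterial) difference is at the endpoints: the paper argues that ${V_1^{(t+1)}}^TV_1^{(t)}$ and ${U_d^{(t)}}^TU_d^{(t+1)}$ are diagonal orthogonal matrices, whereas you directly exhibit a usSVD of $W_1^{(t+1)}$ and $W_d^{(t+1)}$ reusing $V_1^{(t)}$ and $U_d^{(t)}$; both rest on the same full-rank assumption on the $\Sigma_j^{(t)}$ that the paper also invokes.
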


\begin{proof}
Assume that alignment is an invariant of training. Gradient descent on the objective
\begin{align}
    \arg\min_{f \in \mathcal{F}} \; \frac{1}{2n} \displaystyle\sum\limits_{i=1}^{n} \|y^{(i)} -  f(x^{(i)})\|^2_2
\end{align}
proceeds via the following update rule:
\begin{align}
    \label{eqn:GradientDescentLinearSquare}
    W_i^{(t+1)} = W_i^{(t)} + \frac{\gamma}{n}  (W_d^{(t)} \ldots W_{i+1}^{(t)})^T \displaystyle\sum\limits_{l=1}^{n} (y^{(l)} -  f(x^{(l)})) (W_{i-1}^{(t)} \ldots W_{1}^{(t)}x^{(l)})^T,  ~~~~ \forall i \in [d].
\end{align}
Since alignment is an invariant, the initialization satisfies $W^{(t)}_i = U_i\Sigma^{(t)}_iV_i^T$ for $2 \le i \le d-1$, $W_1^{(t)} = U_1\Sigma_1^{(t)}{V_1^{(t)}}^T$, and $W_d^{(t)} = U_d^{(t)}\Sigma_d^{(t)}V_d^T$, where $U_i = V_{i+1}$ for $i \in [d-1].$ For $2 \le i \le d-1$, substituting into Equation (\ref{eqn:GradientDescentLinearSquare}) yields
\begin{align*}
    W_i^{(t+1)} &= U_i \Sigma_i^{(t)}V_i^T + \frac{\gamma}{n}  (U^{(t)}_d \Sigma_d^{(t)}\cdots\Sigma_{i+1}^{(t)} V_{i+1}^T)^T \displaystyle\sum\limits_{l=1}^{n} (y^{(l)} -  f(x^{(l)})) (U_{i-1} \Sigma_{i-1}^{(t)}\cdots\Sigma_1^{(t)} {V^{(t)}_{1}}^T x^{(l)})^T  \\
    &= U_i  \left(\Sigma_i^{(t)} + \frac{\gamma}{n} \prod\limits_{j={i+1}}^{d}{\Sigma_j^{(t)}}^T {U^{(t)}_d}^T  \displaystyle\sum\limits_{l=1}^{n} (y^{(l)} -  f(x^{(l)})) {x^{(l)}}^T V^{(t)}_{1} \prod\limits_{j={1}}^{i-1}{\Sigma_j^{(t)}}^T \right) V_i^T\\ 
    &= U_i  \left(\Sigma_i^{(t)} + \frac{\gamma}{n} \prod\limits_{j={i+1}}^{d}{\Sigma_j^{(t)}}^T ({U^{(t)}_d}^T YX^TV^{(t)}_1 - \Sigma_d^{(t)}\cdots\Sigma_1^{(t)}{V^{(t)}_1}^TXX^T V^{(t)}_{1}) \prod\limits_{j={1}}^{i-1}{\Sigma_j^{(t)}}^T \right) V_i^T.
\end{align*}
Since alignment is an invariant, the quantity
\begin{align}
    \prod\limits_{j={i+1}}^{d}{\Sigma_j^{(t)}}^T ({U^{(t)}_d}^T YX^TV^{(t)}_1 - \Sigma_d^{(t)}\cdots\Sigma_1^{(t)}{V^{(t)}_1}^TXX^T V^{(t)}_{1}) \prod\limits_{j={1}}^{i-1}{\Sigma_j^{(t)}}^T
\end{align}
is a diagonal matrix for all $t$. Since each of the $\Sigma_j$ are square, full rank matrices, the quantity $${U^{(t)}_d}^T YX^TV^{(t)}_1 - \Sigma_d^{(t)}\cdots\Sigma_1^{(t)}{V^{(t)}_1}^TXX^T V^{(t)}_{1}$$ must be diagonal for all $t$.

The update rule for $W_1$ is given by
\begin{align*}
    W_1^{(t+1)} &= W_1^{(t)} + \frac{\gamma}{n}(W_d^{(t)}\cdots W_2^{(t)})^T\sum_{l = 1}^n(y^{(l)} - f(x^{(l)})){x^{(l)}}^T\\
    U_1\Sigma_1^{(t+1)}{V_1^{(t+1)}}^T &= U_1\Sigma_1^{(t)}{V_1^{(t)}}^T + V_2\prod_{j=2}^d{\Sigma_j^{(t)}}^T{U_d^{(t)}}^T(YX^T - U_d\Sigma^{(t)}_d\cdots\Sigma^{(t)}_1{V_1^{(t)}}^TXX^T)\\
    \Longrightarrow \Sigma_1^{(t+1)}{V_1^{(t+1)}}^TV_1^{(t)} &= \Sigma_1^{(t)} + \prod_{j=2}^d{\Sigma_j^{(t)}}^T({U_d^{(t)}}^TYX^TV_1^{(t)} - \Sigma^{(t)}_d\cdots\Sigma^{(t)}_1{V_1^{(t)}}^TXX^TV_1^{(t)}),
\end{align*}
which is diagonal. Therefore ${V_1^{(t+1)}}^TV_1^{(t)}$ is diagonal, and since this is also an orthogonal matrix we must have that $V_1^{(t+1)} = V_1^{(t)}.$

Similarly, the update rule for $W_d$ is given by:
\begin{align*}
    W_d^{(t+1)} &= W_d^{(t)} + \frac{\gamma}{n}\sum_{l = 1}^n(y^{(l)} - f(x^{(l)})){x^{(l)}}^T(W_{d-1}^{(t)}\cdots W_1^{(t)})^T\\
    U_d^{(t+1)}\Sigma_d^{(t+1)}V_d^T &= U_d^{(t)}\Sigma_1^{(t)}V_d^T + (YX^T - U^{(t)}_d\Sigma^{(t)}_d\cdots\Sigma^{(t)}_1{V_1^{(t)}}^TXX^T)V^{(t)}_1\prod_{j=1}^{d-1}{\Sigma_j^{(t)}}^T{U_{d-1}^{(t)}}^T\\
    \Longrightarrow {U_d^{(t)}}^TU_d^{(t+1)}\Sigma_d^{(t+1)} &= \Sigma_d^{(t)} + ({U_d^{(t)}}^TYX^TV_1^{(t)} - \Sigma^{(t)}_d\cdots\Sigma^{(t)}_1{V_1^{(t)}}^TXX^TV_1^{(t)})\prod_{j=1}^{d-1}{\Sigma_j^{(t)}}^T{U_{d-1}^{(t)}}^T,
\end{align*}
which is diagonal. Therefore ${U_d^{(t)}}^TU_d^{(t+1)}$ is also diagonal, implying that $U_d^{(t)} = U_d^{(t+1)}.$ Therefore strong alignment is also an invariant. This means that alignment being an invariant and strong alignment being an invariant are equivalent in the setting where all the $k_i$ are equal. 
\end{proof}

Now that we have shown the equivalence of alignment being an invariant and strong alignment being an invariant in the setting where all the layers are square, we prove the following lemma for the general case where the $k_i$ are not necessarily all equal.

\begin{lemma}
\label{thm:strong alignment invariant non square}
    Let $f: \mathbb{R}^{k_0} \rightarrow \mathbb{R}^{k_d}$ be a linear fully connected network as in Equation \eqref{eq: Network Representation}, and let $r = \min(k_0, \dots, k_n)$. For training under the squared loss on the dataset $(X, Y)$, there exists an aligned initialization $f(x) = W_d^{(0)}\cdots W_1^{(0)}x$ such that $W_i^{(t)} = U_i\Sigma_i^{(t)}V_i^T$ for all $i \in [d]$ (that is, $U_i, V_i$ are not updated) if and only if there exist orthonormal matrices $U \in \mathbb{R}^{k_d \times k_d}, V \in \mathbb{R}^{k_0 \times k_0}$ such that
    \begin{align*}
        U^TYX^TV = \left[
\begin{array}{c c}
\Lambda' & \mathbf{0} \\
\mathbf{0} & A_1
\end{array}
\right], ~\;\text{and}\;\;~~ V^TXX^TV = \left[
\begin{array}{c c}
\Lambda & \mathbf{0} \\
\mathbf{0} & A_2
\end{array}
\right]
\end{align*}
for diagonal $r \times r$ matrices $\Lambda, \Lambda'$ and arbitrary $A_1 \in \mathbb{R}^{(k_0 - r) \times (k_d - r)}, A_2 \in \mathbb{R}^{(k_0 - r) \times (k_0 - r)}.$
\end{lemma}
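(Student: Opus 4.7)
The plan is to mirror the strategy used for Theorem~\ref{thm: Alignment for fully connected networks with multi-dimensional output}, extending the index bookkeeping to accommodate rectangular singular-value matrices. The starting point is the MSE gradient
\[
\frac{\partial L}{\partial W_i} = -\tfrac{1}{n}(W_d \cdots W_{i+1})^T (Y - W_d \cdots W_1 X)\, X^T\, (W_{i-1} \cdots W_1)^T.
\]
Substituting $W_j = U_j \Sigma_j V_j^T$ and collapsing the interior orthogonal factors via the alignment identities $U_j = V_{j+1}$, and setting $U := U_d$ and $V := V_1$, reduces the gradient descent update to
\[
W_i^{(t+1)} = U_i \Bigl(\Sigma_i^{(t)} + \tfrac{\gamma}{n}\, \Sigma_{i+1}^T \cdots \Sigma_d^T\, M^{(t)}\, \Sigma_1^T \cdots \Sigma_{i-1}^T \Bigr) V_i^T,
\]
where $M^{(t)} := U^T Y X^T V - \Sigma_d^{(t)}\cdots\Sigma_1^{(t)}\, V^T X X^T V$. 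Invariance of $U_i, V_i$ for all $i$ then becomes equivalent to the sandwich $\Sigma_{i+1}^T \cdots \Sigma_d^T\, M^{(t)}\, \Sigma_1^T \cdots \Sigma_{i-1}^T$ being rectangular diagonal for every $i \in [d]$ and every $t$. Since each $\Sigma_j$ is rectangular diagonal of size $k_j \times k_{j-1}$, the left factor has non-zero diagonal entries only in rows $a \leq \min(k_i,\dots,k_d)$ and the right factor in columns $b \leq \min(k_0,\dots,k_{i-1})$; the minimum of these two bounds over all choices of $i$ is precisely $r$.

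For the $(\Leftarrow)$ direction, I would take $U_d = U$, $V_1 = V$ from the hypothesis, choose the remaining $U_i = V_{i+1}$ as arbitrary orthonormal matrices, and pick any $\Sigma_i^{(0)}$ whose leading $r$ diagonal entries are non-zero. Because the product $\Sigma_d^{(t)}\cdots\Sigma_1^{(t)}$ is non-zero only in its top-left $r \times r$ diagonal, multiplying by the block matrix $V^T X X^T V$ lands only in the top-left block, so $M^{(t)}$ inherits the block form with a diagonal top-left and bottom-right equal to $A_1$. An entrywise case analysis (top-left, top-right, bottom-left, bottom-right relative to the threshold $r$) then shows that sandwiching this block-diagonal $M^{(t)}$ by the rectangular-diagonal $\Sigma$-products produces a rectangular-diagonal matrix, so the aligned form is preserved, and induction on $t$ closes this direction.

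For $(\Rightarrow)$, I would set $U := U_d$, $V := V_1$, which are invariant by hypothesis. Specializing the rectangular-diagonality condition to $i = 1$ (empty right factor) forces $M^{(t)}[a,b] = 0$ whenever $a \neq b$ and $a \leq \min(k_1,\dots,k_d)$; specializing to $i = d$ forces it whenever $a \neq b$ and $b \leq \min(k_0,\dots,k_{d-1})$. Since both thresholds are at least $r$, together they pin $M^{(t)}$ to the required block form for every $t$. Reading off entries gives, for any off-block-diagonal position $(a,b)$ with $a \leq r$, the linear relation
\[
(U^T Y X^T V)[a,b] = \sigma^{\Pi,(t)}(a)\, (V^T X X^T V)[a,b],
\]
where $\sigma^{\Pi,(t)}(a)$ denotes the $(a,a)$ entry of $\Sigma_d^{(t)}\cdots\Sigma_1^{(t)}$. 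Positions with $a > r$ immediately yield $(U^T Y X^T V)[a,b] = 0$ since the second term vanishes, and the corresponding bottom-left entries of $V^T X X^T V$ follow by symmetry of $XX^T$.

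The main obstacle is the decoupling step: a single-time relation between $(U^T Y X^T V)[a,b]$ and $(V^T X X^T V)[a,b]$ is underdetermined, so I must promote it to the system over all $t$ and use the paper's standing non-zero-gradient assumption at initialization together with a short argument — possibly by perturbing the initial singular values in the leading block to rule out accidental equilibria — that each relevant product $\sigma^{\Pi,(t)}(a)$ actually takes at least two distinct values as $t$ varies. This makes the resulting linear system overdetermined and forces each matrix individually to vanish off the block diagonal. Once the decoupling is secured, the extracted block forms match the statement verbatim and the biconditional closes.
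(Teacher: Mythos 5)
Your proposal follows essentially the same route as the paper's proof: substitute the aligned form into the gradient update, reduce invariance to the sandwiched matrix $\prod_{j=i+1}^{d}\Sigma_j^T\,M^{(t)}\prod_{j=1}^{i-1}\Sigma_j^T$ being rectangular diagonal, use the cases $i=1$ and $i=d$ to pin $M^{(t)}$ to the block form, and decouple $U^TYX^TV$ from $V^TXX^TV$ by letting the leading diagonal of $\Sigma_d^{(t)}\cdots\Sigma_1^{(t)}$ vary over $t$. The decoupling step you flag as the main obstacle is handled in the paper exactly as you suggest — by observing that $\Sigma_{tot}^{(t)}-\Sigma_{tot}^{(0)}$ has nonzero leading diagonal entries provided no singular value is initialized at its optimal value (a condition satisfied with probability one, or by choice of initialization) — so no additional idea is needed.
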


\begin{proof}
Gradient descent on the objective
\begin{align*}
    \arg\min_{f \in \mathcal{F}} \; \frac{1}{2n} \displaystyle\sum\limits_{i=1}^{n} \|y^{(i)} -  f(x^{(i)})\|^2_2
\end{align*}
proceeds via the following update rule:
\begin{align}
    \label{eqn:GradientDescentLinear}
    W_i^{(t+1)} = W_i^{(t)} + \frac{\gamma}{n}  (W_d^{(t)} \ldots W_{i+1}^{(t)})^T \displaystyle\sum\limits_{l=1}^{n} (y^{(l)} -  f(x^{(l)})) (W_{i-1}^{(t)} \ldots W_{1}^{(t)}x^{(l)})^T,  ~~~~ \forall i \in [d],
\end{align}
where $\gamma$ is the learning rate and superscript $(t)$ denotes the gradient descent step. Assume that the network is initialized to be aligned, that is, there exist orthonormal $U_i, V_i$ and diagonal matrices $\Sigma_i$ such that $W_i = U_i\Sigma_iV_i^T$ and $U_i = V_{i+1}$ for $i \in [d-1].$ Substituting into Equation (\ref{eqn:GradientDescentLinear}) yields
\begin{align*}
    W_i^{(t+1)} &= U_i \Sigma_i^{(t)}V_i^T + \frac{\gamma}{n}  (U_d \Sigma_d^{(t)}\cdots\Sigma_{i+1}^{(t)} V_{i+1}^T)^T \displaystyle\sum\limits_{l=1}^{n} (y^{(l)} -  f(x^{(l)})) (U_{i-1} \Sigma_{i-1}^{(t)}\cdots\Sigma_1^{(t)} V_{1}^T {x^{(l)})}^T  \\
    &= U_i  \left(\Sigma_i^{(t)} + \frac{\gamma}{n} \prod\limits_{j={i+1}}^{d}{\Sigma_j^{(t)}}^T U_d^T  \displaystyle\sum\limits_{l=1}^{n} (y^{(l)} -  f(x^{(l)})) {x^{(l)}}^T V_{1} \prod\limits_{j={1}}^{i-1}{\Sigma_j^{(t)}}^T \right) V_i^T\\ 
    &= U_i  \left(\Sigma_i^{(t)} + \frac{\gamma}{n} \prod\limits_{j={i+1}}^{d}{\Sigma_j^{(t)}}^T (U_d^T YX^TV_1 - \Sigma_d^{(t)}\cdots\Sigma_1^{(t)}V_1^TXX^T V_{1}) \prod\limits_{j={1}}^{i-1}{\Sigma_j^{(t)}}^T \right) V_i^T.
\end{align*}
Thus strong alignment is an invariant if and only if for all $i$, the quantity 
\begin{align*}
    \prod\limits_{j={i+1}}^{d}{\Sigma_j^{(t)}}^T (U_d^T YX^TV_1 - \Sigma_d^{(t)}\cdots\Sigma_1^{(t)}V_1^TXX^T V_{1}) \prod\limits_{j={1}}^{i-1}{\Sigma_j^{(t)}}^T
\end{align*}
is an $k_i \times k_{i-1}$ diagonal matrix for all $t$. At initialization each of the $\Sigma_j$ have rank at least $r$. Considering $i = 1$ and $i=d$, the above quantity is diagonal if and only if the matrix
\begin{align}
    \label{eqn:diagonal term}
    U_d^T YX^TV_1 - \Sigma_d^{(t)}\cdots\Sigma_1^{(t)}V_1^TXX^T V_{1}
\end{align}
has its top $r$ rows and top $r$ columns all diagonal; i.e. we can write this expression as
\begin{align}
\label{eqn:block-matrix-form}
\left[
\begin{array}{c c}
D & \mathbf{0} \\
\mathbf{0} & A
\end{array}
\right]
\end{align}
for an $r \times r$ diagonal matrix $D$ and an arbitrary $(k_d - r) \times (k_0 - r)$ matrix $A$.

For the first direction, assume that strong alignment is an invariant, i.e. that Equation (\ref{eqn:diagonal term}) can be written in the above block diagonal form. Define $\Sigma^{(t)}_{tot} = \Sigma_d^{(t)}\cdots\Sigma_1^{(t)}$ -- this is a diagonal matrix whose only nonzero entries are the first $r$ on the diagonal. We know that 
$$ U_d^T YX^TV_1 - \Sigma^{(t)}_{tot}V_1^TXX^T V_{1}$$ is of the form of Equation (\ref{eqn:block-matrix-form}) for all gradient descent steps $t$, and thus the quantity
$$\left(\Sigma^{(t)}_{tot} - \Sigma^{(0)}_{tot}\right)V_1^TXX^T V_{1}$$ is of this form as well. Assuming that we've not initialized any of the singular values to be their optimal value (which is satisfied with probability 1), the top $r$ diagonal entries of $\Sigma^{(t)}_{tot} - \Sigma^{(0)}_{tot}$ are nonzero, which means that the top left $r \times r$ submatrix of $V_1^TXX^TV_1$ is diagonal, and that the top right submatrix consists of all zeros. But since $V_1^TXX^TV_1$ is symmetric, the bottom left submatrix must also consist of all zeros, and thus we have
$$V_1^TXX^TV_1 = \left[
\begin{array}{c c}
D_2 & \mathbf{0} \\
\mathbf{0} & A_2
\end{array}
\right]$$
for an $r \times r$ diagonal matrix $D_2$ and arbitrary $(k_0 - r) \times (k_0 - r)$ matrix $A_2$. Plugging this into Equation (\ref{eqn:diagonal term}) implies that $U_d^TYX^TV_1$ must be of this form as well.

We next show the other direction. Assume that for some orthonormal matrices $U$ and $V$, it holds that $V^TXX^TV$ is diagonal and $U^TYX^TV$ can be written in the block matrix form given by Equation (\ref{eqn:block-matrix-form}). Initializing the layers such that $U_d = U, V_1 = V,$ and $U_i = V_{i+1}$ for $i \in [d-1]$ implies that Equation ($\ref{eqn:diagonal term}$) is also of this block diagonal form, as desired.
\end{proof}

\subsection{Proof of Corollary \ref{cor:singular-update}}
\label{sec: singular update proof}
\begin{proof}The conditions of strong alignment imply the conditions of Lemma 2, which in turn implies that there exist orthonormal matrices $U, V$ such that     
\begin{align*}
        &U^TYX^TV =  
        \begin{bmatrix}
        \Lambda' & \mathbf{0} \\
        \mathbf{0} &  A_1
        \end{bmatrix}, \textnormal{ and} \\
        &V^TXX^TV = 
        \begin{bmatrix}
        \Lambda & \mathbf{0} \\
        \mathbf{0} & A_2
        \end{bmatrix},
    \end{align*}
where $\Lambda, \Lambda'$ are $r \times r$ diagonal matrices. Furthermore, from the proof of Theorem~\ref{thm: Alignment for fully connected networks with multi-dimensional output}, if the layers are initialized to be aligned, with $U_d = U$ and  $V_1 = V$, then the gradient descent updates are as follows:
\begin{align*}
    W_i^{(t+1)} &= U_i  \left(\Sigma_i^{(t)} + \frac{\gamma}{n} \prod\limits_{j={i+1}}^{d}{\Sigma_j^{(t)}}^T (U^T YX^TV_1 - \Sigma_d^{(t)}\cdots\Sigma_1^{(t)}V^TXX^T V) \prod\limits_{j={1}}^{i-1}{\Sigma_j^{(t)}}^T \right) V_i^T.
\end{align*}
Since the minimum of the ranks of the $\Sigma_i^{(t)}$ is $r$, only the top $r$ singular values of $W_i$ are updated. Plugging in the expressions for $U^TYX^TV$ and $V^TXX^TV$ and restricting to the top $r$ singular values (which we denote by $\Sigma'_i$), we obtain the statement of Corollary~\ref{cor:singular-update}, with the singular values of each layer being updated as:
\begin{align*}
    {\Sigma'_i}^{(t+1)} = {\Sigma'_i}^{(t)} + \frac{\gamma}{n}\prod_{j\neq i}{\Sigma'_j}^{(t)}(\Lambda' - \prod_{j=1}^d{\Sigma'_j}^{(t)}\Lambda).
\end{align*}
This completes the proof. 
\end{proof}

\subsection{Proof of Proposition \ref{prop: linear convergence}}
\label{linear convergence proof}
\begin{proof}
By Corollary \ref{cor:singular-update}, under strong alignment, each singular value is updated independently of each other. Thus we can focus on how the $k$th singular value for each layer is updated. Recall that $\sigma_k(W_i^{(t)})$ denotes the $k$th diagonal entry of $\Sigma_i^{(t)}$. Since we're focusing on a fixed $k$, we drop the subscript $k$ for convenience and let $\sigma_i^{(t)}$ equal $\sigma_k(W_i^{(t)})$. The $\sigma$ are updated by the following update rule:
$$\sigma_i^{(t + 1)} = \sigma_i^{(t)} + \frac{\gamma}{n}\prod_{j\neq i}\sigma_j^{(t)}(\lambda'_k - \lambda_k\prod_{j=1}^d \sigma_j^{(t)}),$$
where $\lambda'_k, \lambda_k$ are the $k$th diagonal elements of $\Lambda', \Lambda$. We assume that $\Lambda'$ and $\Lambda$ have the same zero pattern. Therefore $\lambda_k = 0$ if and only if $\lambda'_k = 0$. If both of these values are  zero, then $\sigma_i$ is not updated.

Otherwise, assume $\lambda_k, \lambda'_k \neq 0$. Note that $\lambda_k > 0$, since $XX^T$ is positive semidefinite. We can also negate columns of $U$ to ensure that $\lambda'_k > 0$ as well. Let $\eta = \frac{\gamma\lambda_k}{n}$, and define $S^{(t)} = \prod_{j=1}^d \sigma_j^{(t)}$. This yields
\begin{align}\label{sval}
    \sigma_i^{(t + 1)} = \sigma_i^{(t)} + \eta\frac{S^{(t)}}{\sigma_i^{(t)}}(\frac{\lambda'_k}{\lambda_k} - S^{(t)}).
\end{align}
Therefore (dropping the superscript to let $S = S^{(t)}$),
\begin{align*}
    S^{(t+1)} &= \prod_{i=1}^d \sigma_i^{(t+1)} = \prod_{i=1}^d \left(\sigma_i^{(t)} + \eta S\frac{1}{\sigma_i^{(t)}}(\frac{\lambda'_k}{\lambda_k} - S)\right)\\
    &= S + \sum_{T \subset [d] : |T| \ge 1} \eta^{|T|}S^{|T|}(\frac{\lambda'_k}{\lambda_k} - S)^{|T|}\prod_{i \in T}\frac{1}{\sigma_i^{(t)}}\prod_{i \not\in T}\sigma_i^{(t)}\\
    &= S + \sum_{T \subset [d] : |T| \ge 1} \eta^{|T|}S^{|T|+1}(\frac{\lambda'_k}{\lambda_k} - S)^{|T|}\prod_{i \in T}\frac{1}{(\sigma_i^{(t)})^2},
\end{align*}
and hence
\begin{align}
    \frac{\lambda'_k}{\lambda_k} - S^{(t+1)} &= \frac{\lambda'_k}{\lambda_k} - S -  \sum_{T \subset [d] : |T| \ge 1} \eta^{|T|}S^{|T|+1}(\frac{\lambda'_k}{\lambda_k} - S)^{|T|}\prod_{i \in T}\frac{1}{(\sigma_i^{(t)})^2}\\
    &= (\frac{\lambda'_k}{\lambda_k} - S)\left(1 -  \sum_{T \subset [d] : |T| \ge 1} \eta^{|T|}S^{|T|+1}(\frac{\lambda'_k}{\lambda_k} - S)^{|T|-1}\prod_{i \in T}\frac{1}{(\sigma_i^{(t)})^2}\right).
\end{align}
Thus we obtain
\begin{align}
    \label{eqn:recursion}
    \frac{\lambda'_k}{\lambda_k} - S^{(t+1)} &= \left(\frac{\lambda'_k}{\lambda_k} - S^{(t)}\right)\cdot r_k^{(t)},
\end{align}
where 
\begin{align}
    \label{eqn:bound}
    r_k^{(t)} = 1 -  \sum_{T \subset [d] : |T| \ge 1} \eta^{|T|}S^{|T|+1}(\frac{\lambda'_k}{\lambda_k} - S)^{|T|-1}\prod_{i \in T}\frac{1}{(\sigma_i^{(t)})^2}.
\end{align}
We aim to bound $r_k^{(t)}$ from both above and below. First, we show that $r_k^{(t)}$ is nonnegative in order to prove the following lemma:

\begin{lemma}
$0 < S^{(j)} \le \frac{\lambda_k'}{\lambda_k}$ for all $j \ge 0$.
\end{lemma}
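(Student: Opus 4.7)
The plan is to prove this by induction on $j$, carrying the auxiliary invariant $\sigma_i^{(t)} \ge \sigma_i^{(0)}$ for all $i$ alongside the two stated inequalities. The base case $j=0$ is immediate from the hypotheses of Proposition~\ref{prop: linear convergence}: positivity of each $\sigma_k(W_i^{(0)})$ gives $S^{(0)} > 0$, and $\prod_i \sigma_k(W_i^{(0)}) < \lambda'_k/\lambda_k$ gives the upper bound.

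For the inductive step, suppose $0 < S^{(t)} \le \lambda'_k/\lambda_k$ and $\sigma_i^{(t)} \ge \sigma_i^{(0)}$ for each $i$. I would first use Equation~\eqref{sval} directly: it writes $\sigma_i^{(t+1)}$ as $\sigma_i^{(t)}$ plus a nonnegative correction (the factors $\eta$, $S^{(t)}/\sigma_i^{(t)}$, and $\lambda'_k/\lambda_k - S^{(t)}$ are all nonnegative under the inductive hypothesis). Hence $\sigma_i^{(t+1)} \ge \sigma_i^{(t)} \ge \sigma_i^{(0)} > 0$, which both gives $S^{(t+1)} > 0$ and preserves the auxiliary invariant for the next step.

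For the upper bound, I would factor the update multiplicatively as
\[
S^{(t+1)} \;=\; S^{(t)}\prod_{i=1}^d \left(1 + \frac{\eta S^{(t)} u^{(t)}}{(\sigma_i^{(t)})^2}\right), \qquad u^{(t)} := \frac{\lambda'_k}{\lambda_k} - S^{(t)},
\]
and apply $1+x \le e^x$ termwise, giving $S^{(t+1)} \le S^{(t)}\exp\!\bigl(\eta S^{(t)} u^{(t)}\sum_i (\sigma_i^{(t)})^{-2}\bigr)$. The auxiliary invariant yields $\sum_i (\sigma_i^{(t)})^{-2} \le d/\min_i (\sigma_i^{(0)})^2$, and combining this with the learning rate hypothesis $\gamma \le (n\ln 2/d)\cdot \sigma_k(W_i^{(0)})^2\lambda_k/{\lambda'_k}^2$ collapses the exponent to at most $s(1-s)\ln 2$, where $s := S^{(t)}\lambda_k/\lambda'_k \in (0,1]$. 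The desired bound $S^{(t+1)} \le \lambda'_k/\lambda_k$ therefore reduces to the one-variable inequality $g(s) := \ln s + s(1-s)\ln 2 \le 0$ on $(0,1]$; I would verify this by noting $g(1)=0$ and $g'(s) = 1/s - (2s-1)\ln 2 > 0$ on $(0,1]$ (since $1/s \ge 1 > \ln 2 \ge (2s-1)\ln 2$), so $g$ is strictly increasing with terminal value $0$.

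The main obstacle I anticipate is the tightness of $g(s)\le 0$ near $s=1$: the particular constant $\ln 2$ in the hypothesized learning rate is exactly what is needed to make the induction close at the boundary, and weakening any intermediate estimate (replacing $1+x \le e^x$ by a cruder bound, or dropping the monotonicity $\sigma_i^{(t)} \ge \sigma_i^{(0)}$) would require strengthening the learning rate assumption beyond what is stated.
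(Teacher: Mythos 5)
Your proof is correct, and it shares the paper's overall skeleton: induction on $j$, positivity and the monotonicity $\sigma_i^{(t+1)}\ge\sigma_i^{(t)}\ge\sigma_i^{(0)}$ from the sign of the correction term, and the learning-rate hypothesis to control the growth of $S$. Where you diverge is in how the upper bound is closed. The paper expands $\prod_i\bigl(\sigma_i^{(t)}+\eta S u/\sigma_i^{(t)}\bigr)$ as a sum over subsets $T\subseteq[d]$, factors the error as $\frac{\lambda'_k}{\lambda_k}-S^{(t+1)}=\bigl(\frac{\lambda'_k}{\lambda_k}-S^{(t)}\bigr)r_k^{(t)}$, and proves $r_k^{(t)}\ge 0$ by crudely bounding $S^{|T|+1}u^{|T|-1}\le(\lambda'_k/\lambda_k)^{2|T|}$ so that the subtracted sum is at most $\bigl(1+\eta(\min_i\sigma_i^{(0)})^{-2}(\lambda'_k/\lambda_k)^2\bigr)^d-1\le e^{\ln 2}-1=1$. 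You instead bound the multiplicative growth directly via $1+x\le e^x$ and reduce to the scalar inequality $\ln s+s(1-s)\ln 2\le 0$ on $(0,1]$, which you verify by monotonicity of $g$. Your route is sharper because it retains the product $S\,u$ rather than replacing it by its worst case $(\lambda'_k/\lambda_k)^2$; indeed, since the critical constant for $\ln s+c\,s(1-s)\le 0$ is $c=1$ (as $g_c'(1)=1-c$), your argument would tolerate replacing $\ln 2$ by $1$ in the learning rate, so your closing remark that $\ln 2$ is exactly what makes the induction close is not quite right --- there is slack. What the paper's coarser factorization buys is that the quantity $r_k^{(t)}$ it isolates is reused immediately afterwards: the subsequent upper bound $r_k^{(t)}\le 1-\eta d(S^{(0)})^{2-2/d}$ is what delivers the linear convergence rate, whereas your exponential bound on $S^{(t+1)}/S^{(t)}$ would have to be supplemented by a separate contraction estimate for that step.
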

\begin{proof}
We proceed by induction. By the original assumptions in Proposition \ref{prop: linear convergence}, $0 < S^{(0)} \le \frac{\lambda_k'}{\lambda_k}$. Now assume that $0 < S^{(j)} \le \frac{\lambda_k'}{\lambda_k}$ for all $j \le t$. By the update rule in Equation (\ref{sval}), $\sigma_i^{(j+1)} \ge \sigma_i^{(j)}$. Since $\sigma_i^{(0)} > 0$, $\sigma_i^{(j)} > 0$, so $S^{(j)} > 0$. We also have that
$$\prod_{i \in T}\frac{1}{(\sigma_i^{(t)})^2} \le \prod_{i \in T}\frac{1}{(\sigma_i^{(0)})^2} \le \frac{1}{(\min_i \sigma_i^{(0)})^{2|T|}}.$$
Next, note that we can bound
$$S^{|T| + 1}(\frac{\lambda'_k}{\lambda_k} - S)^{|T|-1} \le (\frac{\lambda'_k}{\lambda_k})^{2|T|}.$$
This means that we can upper bound the sum in Equation (\ref{eqn:bound}) as
\begin{align*}
    \sum_{T \subset [d] : |T| \ge 1} \eta^{|T|}S^{|T|+1}(\frac{\lambda'_k}{\lambda_k} - S)^{|T|-1}\prod_{i \in T}\frac{1}{(\sigma_i^{(t)})^2} & \le \sum_{T \subset [d]: |T| \ge 1}\eta^{|T|} (\min_i \sigma_i^{(0)})^{-2|T|}(\frac{\lambda'_k}{\lambda_k})^{2|T|}\\
    &= \left(1 + \eta\cdot(\min_i \sigma_i^{(0)})^{-2}(\frac{\lambda'_k}{\lambda_k})^{2}\right)^d - 1 .
\end{align*}
Since $\gamma \le  \frac{n \ln 2}{d} \cdot \frac{\min_i {(\sigma_i^{(0)})}^2 \lambda_k}{\lambda'^2_k},$ we have that $\eta \le \ln 2 \cdot \frac{\min_i {(\sigma_i^{(0)})}^2}{d}\cdot \frac{\lambda_k^2}{{\lambda'_k}^2},$ and thus the right-hand side of the above expression can be upper bounded by 
\begin{align*}
    \left(1 + \eta\cdot(\min_i \sigma_i^{(0)})^{-2}\right)^d - 1 \le e^{d\eta (\min_i \sigma_i^{(0)})^{-2}} - 1 &\le e^{\ln 2} - 1 = 1.
\end{align*}
Therefore $r^{(t)}_k \ge 0$. Plugging into Equation (\ref{eqn:recursion}), since $S^{(t)} = S \le \frac{\lambda'_k}{\lambda_k},$ we get that $S^{(t+1)} \le \frac{\lambda'_k}{\lambda_k}$, which completes the inductive step.
\end{proof}

Next, we would like to upper bound $r_k^{(t)}$ by a term independent of $t$ in order to obtain linear convergence. We can lower bound the sum in Equation (\ref{eqn:bound}) by the sets with size 1, so $$\sum_{T \subset [d] : |T| \ge 1} \eta^{|T|}S^{|T|+1}(\frac{\lambda'_k}{\lambda_k} - S)^{|T| - 1}\prod_{i \in T}\frac{1}{(\sigma_i^{(t)})^2} \ge \sum_{i = 1}^d \eta S^2 \frac{1}{(\sigma_i^{(t)})^2} \ge \eta S^2 \cdot dS^{-2/d},$$
where the last inequality is due to AM-GM . 
Lemma 3 implies that $S^{(j+1)} \ge S^{(j)}$, which means that the above sum is at least $\eta d (S^{(0)})^{2 - 2/d}$, which means that we can upper bound $r_k^{(t)}$ by
$$r_k^{(t)} \le 1 - \eta d (S^{(0)})^{2 - 2/d}.$$
This implies that $S^{(t+1)}$ is closer to $\frac{\lambda'_k}{\lambda_k}$ than $S$ is, and in particular
$$\frac{\lambda'_k}{\lambda_k} - S^{(t+1)} \le (\frac{\lambda'_k}{\lambda_k} - S)(1 - d\eta (S^{(0)})^{2 - 2/d});$$
hence $$\frac{\lambda'_k}{\lambda_k} - S^{(t)} \le (\frac{\lambda'_k}{\lambda_k} - S^{(0)})(1 - d\eta (S^{(0)})^{2 - 2/d})^t.$$
Since the initialization is fixed, the quantity $1 - d\eta (S^{(0)})^{2 - 2/d}$ is fixed, and thus $S^{(t)}$ converges linearly to $\frac{\lambda'_k}{\lambda_k}$. Therefore each of the top $k$ singular values converge linearly to their optimal value $\frac{\lambda'_k}{\lambda_k}$, which means that the loss converges linearly as well.

To complete the proof, it suffices to show that this limit solution achieves a training loss of zero. This is proven in a more general setting at the end of Appendix~\ref{sec: completing the proof}.
\end{proof}

\subsection{Proof of Theorem \ref{thm:strong alignment invariant non square}}
\label{proof strong alignment non square}

We can finally state the generalization of Theorem \ref{thm: Alignment for fully connected networks with multi-dimensional output} to the non-square setting:

\begin{theorem}
\label{thm:strong alignment invariant non square}
    Let $f: \mathbb{R}^{k_0} \rightarrow \mathbb{R}^{k_d}$ be a linear fully connected network as in Equation \eqref{eq: Network Representation}, and let $r = \min(k_0, \dots, k_n)$. Strong alignment is an invariant of training under the squared loss on the dataset $(X, Y)$ if and only if there exist orthonormal matrices $U \in \mathbb{R}^{k_d \times k_d}, V \in \mathbb{R}^{k_0 \times k_0}$ such that
    \begin{align*}
        U^TYX^TV = \left[
\begin{array}{c c}
\Lambda' & \mathbf{0} \\
\mathbf{0} & A_1
\end{array}
\right], ~\;\text{and}\;\;~~ V^TXX^TV = \left[
\begin{array}{c c}
\Lambda & \mathbf{0} \\
\mathbf{0} & A_2
\end{array}
\right]
\end{align*}
for diagonal $r \times r$ matrices $\Lambda, \Lambda'$ and arbitrary $A_1 \in \mathbb{R}^{(k_0 - r) \times (k_d - r)}, A_2 \in \mathbb{R}^{(k_0 - r) \times (k_0 - r)}.$
\end{theorem}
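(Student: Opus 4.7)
The plan is to decompose the condition ``strong alignment is an invariant of training'' into the two pieces of Definition~\ref{def: Invariance of Alignment}: structural invariance of all singular-vector matrices $U_i, V_i$ throughout training, and attainment of zero training loss in the limit. Lemma~2 above already gives an iff equivalence between the structural piece and the block-diagonal data conditions on $U^T Y X^T V$ and $V^T X X^T V$; Corollary~\ref{cor:singular-update} together with Proposition~\ref{prop: linear convergence}, combined with the paper's standing interpolability assumption, will supply the zero-loss piece.

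For the ``only if'' direction, the observation is that if strong alignment is an invariant of training, then Definition~\ref{def: Invariance of Alignment} furnishes an initialization under which gradient descent preserves every $U_i$ and $V_i$, including the $V_1$ and $U_d$ that strong alignment additionally demands. This is precisely the hypothesis of the ``only if'' direction of Lemma~2, whose conclusion with $U := U_d$ and $V := V_1$ is exactly the required block-diagonal form.

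For the ``if'' direction I would first apply the reverse direction of Lemma~2 to the given data conditions, producing an aligned initialization with $V_1 = V$, $U_d = U$, and appropriate intermediate $U_i = V_{i+1}$ under which gradient descent leaves every singular-vector matrix fixed; this secures properties (a)--(c) of Definition~\ref{def: Invariance of Alignment} plus the additional strong alignment requirement. It remains to choose initial singular values driving gradient descent to a zero-loss solution. Corollary~\ref{cor:singular-update} reduces the dynamics to independent scalar recursions on the top-$r$ singular values indexed by $k \in [r]$, and Proposition~\ref{prop: linear convergence} supplies an explicit learning rate and initialization satisfying $0 < \prod_i \sigma_k(W_i^{(0)}) < \lambda'_k / \lambda_k$ under which $\prod_i \sigma_k(W_i^{(t)})$ converges linearly to $\lambda'_k / \lambda_k$.

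The hardest step is verifying that the fixed-point network $f^{(\infty)} = U \Sigma V^T$, with $\Sigma$ having top-$r$ diagonal entries $\lambda'_k / \lambda_k$ and zeros elsewhere, genuinely achieves zero training loss; Proposition~\ref{prop: linear convergence} alone does not yield this. Letting $Y_1, Y_2$ denote the top $r$ and bottom $k_d - r$ rows of $U^T Y$ and $X_1, X_2$ the analogous block rows of $V^T X$, orthonormality of $U$ factors the training loss at $f^{(\infty)}$ as $\|Y_1 - \Sigma' X_1\|_F^2 + \|Y_2\|_F^2$ with $\Sigma' := \mathrm{diag}(\lambda'_k / \lambda_k)$. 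To close this out I would invoke the interpolability assumption ($Y = AX$ for some linear network $A$ of rank at most $r$), form $\tilde A := U^T A V$ in the same block decomposition, and match blocks against the data condition: the upper blocks of $\tilde A$ are forced to be $\tilde A_{11} = \Sigma'$ and $\tilde A_{12} = \tilde A_{21} = 0$, so $Y_1 = \Sigma' X_1$ follows immediately and the residual reduces to $\|\tilde A_{22} X_2\|_F^2$. The rank-$r$ constraint on $A$ combined with the fact that $\tilde A_{11}$ already has full rank $r$ then forces $\tilde A_{22} = 0$, yielding $Y_2 = 0$ and zero training loss. The most delicate point of the plan is reconciling the ``arbitrary'' block $A_1$ appearing in the data condition with this architectural rank constraint.
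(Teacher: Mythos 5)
Your skeleton matches the paper's: both directions of the structural claim are delegated to Lemma~2, and the remaining work is to show that some aligned initialization is driven by gradient descent to zero loss. The genuine gap is in that last step, and it is concentrated in the degenerate singular-value cases that your plan quietly assumes away. Proposition~\ref{prop: linear convergence} applies only when $\lambda_k>0$ and $\lambda'_k>0$ for the index $k$ in question (its hypothesis $0<\prod_i\sigma_k(W_i^{(0)})<\lambda'_k/\lambda_k$ is unsatisfiable otherwise), so your appeal to it does not cover $\lambda_k=0$, nor $\lambda'_k=0$ with $\lambda_k\neq 0$; the paper devotes a separate monotone-decrease argument to showing $\prod_i\sigma_k(W_i^{(t)})\to 0$ in the latter case. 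More seriously, your block-matching endgame breaks in exactly these cases: from $\tilde A_{12}A_2=0$ you cannot conclude $\tilde A_{12}=0$ unless $A_2=X_2X_2^T$ is invertible, and $\tilde A_{11}=\Lambda'\Lambda^{-1}$ has full rank $r$ only when every $\lambda'_k$ is nonzero, which is not given. A concrete failure: take $d=3$ with widths $(3,2,2,3)$ so $r=2$, $X=I_3$, $V=I$, and $Y=U\,\mathrm{diag}(1,0,1)$ for orthonormal $U$. The stated data conditions hold with $\Lambda=I_2$, $\Lambda'=\mathrm{diag}(1,0)$, $A_1=1$, and $Y$ is interpolable by a rank-$2$ map, yet the aligned trajectory built from this $U,V$ converges to $U\,\mathrm{diag}(1,0,0)$, which has strictly positive loss. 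The theorem survives because one may first re-choose $U$ and $V$ by permuting columns jointly so that the ``active'' directions occupy the top-$r$ block (here $U'=UP_{23}$, $V'=P_{23}$ gives $\Lambda'=I_2$, $A_1=0$); only after this normalization do your rank argument and Proposition~\ref{prop: linear convergence} apply. The paper performs a version of this permutation step (to ensure $\lambda_k=0\Rightarrow\lambda'_k=0$), and your own closing caveat about reconciling the arbitrary block $A_1$ with the rank constraint is pointing at precisely this hole, but the proposal does not close it.

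On the positive side, where you do diverge from the paper you improve on it: the paper's verification of zero loss is the single assertion that $U_d\Lambda'\Lambda^{-1}V_1^T$ ``is the solution given by the pseudoinverse which obviously has a loss of zero,'' whereas your decomposition of the limiting loss as $\|Y_1-\Sigma'X_1\|_F^2+\|Y_2\|_F^2$ makes explicit what must actually be checked. Once the permutation normalization above is in place, the first term can be killed without invoking $\tilde A_{12}=0$ at all: $(Y_1-\Sigma'X_1)X^T=0$ by the data conditions, while interpolability $Y=AX$ puts the rows of $Y_1-\Sigma'X_1$ in the row space of $X$, forcing $Y_1=\Sigma'X_1$. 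I would recommend restructuring the endgame around that observation plus the column permutation, rather than around invertibility of $A_2$ and full rank of $\tilde A_{11}$.
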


\begin{proof}
By Lemma 2 we know that under strong alignment there exist $U$ and $V$ satisfying the above conditions. In the other direction, Lemma 2 also tells us that given $U$ and $V$ satisfying the data conditions, all the conditions of strong alignment hold except for convergence to a global minimum. 

To conclude, we must show that regardless of the zero pattern of $\Lambda$ or $\Lambda'$, under a strongly aligned initialization the network converges to a solution with a loss of zero.

Using the convenient notation that $\sigma_i^{(t)} = \sigma_k(W_i^{(t)}),$ we again focus on how the $k$th singular values of each layer are updated, for some $k \in [r]$. Recall that the $\sigma$'s are updated as
$$\sigma_i^{(t + 1)} = \sigma_i^{(t)} + \frac{\gamma}{n}\prod_{j\neq i}\sigma_j^{(t)}(\lambda'_k - \lambda_k\prod_{j=1}^d \sigma_j^{(t)}).$$
The rank of $X$ must be at least the rank of $Y$ in order for the data to be linearly interpolated. Therefore we can choosen $U, V$ (via permuting columns) to ensure that whenever $\lambda_k = 0$, $\lambda'_k = 0$ as well. This ensures that $\sigma_k(W_i^{(t)})$ is never updated. If $\lambda_k, \lambda'_k \neq 0$, then we showed in Proposition \ref{prop: linear convergence} that $S^{(t)}$ converges to $\lambda'_k/\lambda_k$ in the limit.

Finally, we consider the case where $\lambda'_k = 0, \lambda_k \neq 0$. Assume that $\sigma_i^{(t)} < 1$ and $\gamma < \frac{n}{\lambda_k}$. Then, the $\sigma_i$'s update as
$$\sigma_i^{(t+1)} = \sigma_i^{(t)} + \frac{\gamma}{n}\prod_{j \neq i}\sigma_j^{(t)}\left(-\lambda_k\prod_{j = 1}^d\sigma_j^{(t)}\right) = \sigma_i\left(1 - \eta\prod_{j\neq i}(\sigma_j^{(t)})^2 \right),$$
where $\eta = \frac{\gamma\lambda_k}{n}.$ We observe that $0 \le \sigma_i^{(t+1)} \le \sigma_i^{(t)}.$ Therefore
$$0 \le S^{(t+1)} = S^{(t)}\prod_{i=1}^d\left(1 - \eta\prod_{j\neq i}(\sigma_j^{(t)})^2 \right) \le S^{(t)}\exp\left(-\eta\sum_{i=1}^d \prod_{j\neq i}(\sigma_j^{(t)})^2\right) \le S^{(t)}\exp\left(-\eta d {S^{(t)}}^{2 - 2/d}\right).$$
Since $S^{(0)}$ is positive, we see that $0 \le S^{(t+1)} \le S^{(t)}$, and therefore $S^{(t)}$ must converge to some constant $c$. Assume that $c \neq 0$. For all $\epsilon >0$, there exists some $t$ such that $S^{(T)} < c + \epsilon$. Then, 
$$S^{(T+1)} \le S^{(T)}\exp\left(-\eta d {S^{(T)}}^{2 - 2/d}\right) < (c + \epsilon)\exp\left(-\eta c^{2 - 2/d}\right),$$
where $\exp\left(-\eta c^{2 - 2/d}\right)$ is a constant which is less than 1. Hence if we choose $\epsilon$ such that $\exp\left(-\eta c^{2 - 2/d}\right) < \frac{c + \epsilon}{c},$ then $S^{(T+1)} < c$, a contradiction. Therefore $c = 0$, and hence $S^{(t)} \rightarrow 0 = \lambda_k'/\lambda_k.$ 

In general, we have shown that if $\lambda_k \neq 0$, then $\sigma_k(W_1{(t)})\cdots\sigma_k(W_d{(t)})\rightarrow \lambda'_k/\lambda_k$. This solution is given by $f(x) = U_d\Lambda'\Lambda^{-1}V_1^Tx$, which is the solution given by the pseudoinverse which obviously has a loss of zero.  
\end{proof}

\subsection{Completing the Proof of Theorem \ref{thm: Alignment for fully connected networks with multi-dimensional output}}
\label{sec: completing the proof}
\begin{proof}
In Lemma 1, we showed that in the setting where all layers are square, alignment is equivalent to strong alignment. Theorem 4 states that in general, strong alignment is an invariant if and only if there exist $U, V$ satisfying particular data conditions. Since in the square setting $r = k$, by Theorem~\ref{thm:strong alignment invariant non square} we have that strong alignment is an invariant if and only if there exist $U, V$ such that $U^TYX^TV$ and $V^TXX^TV$ are diagonal, as desired.
\end{proof}

\subsection{Extension of Proposition \ref{prop: Alignment for 1 dimensional outputs}}
\label{alignment 1d proof}
We extend Proposition \ref{prop: Alignment for 1 dimensional outputs} to Proposition \ref{prop:Alignment 1 dimensional extension} below.  

\begin{prop}
\label{prop:Alignment 1 dimensional extension}
Assuming gradient descent avoids the point where all parameters are zero, alignment is an invariant of training for any linear fully connected network $f: \mathbb{R}^{k_0} \rightarrow \mathbb{R}$, any convex, twice continuously differentiable loss function, and data $(X,Y)\in \mathbb{R}^{k_0 \times n} \times \mathbb{R}^{1 \times n}$ for which the network can achieve zero training error.
\end{prop}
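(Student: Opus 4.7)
The plan is to extend the argument for Proposition~\ref{prop: Alignment for 1 dimensional outputs} in two stages: first, exhibit an aligned initialization whose structure is preserved by the gradient descent update, and second, use the convexity and smoothness of $\ell$ to conclude convergence to zero training error. For the initialization, I would choose arbitrary unit vectors $u_i\in\mathbb{R}^{k_i}$ for $i=1,\dots,d-1$, put $v_{i+1}=u_i$, pick any $v_1\in\mathbb{R}^{k_0}$, take $u_d=1$ and $v_d=u_{d-1}$, and initialize $W_i^{(0)}=\sigma_i^{(0)} u_i v_i^T$ with every $\sigma_i^{(0)}\neq 0$. Under alignment, the cofactors $W_d^{(t)}\cdots W_{i+1}^{(t)}$ and $W_{i-1}^{(t)}\cdots W_1^{(t)}$ telescope to rank-one expressions, so substituting into the gradient update reveals that $\partial L/\partial W_i$ is a scalar multiple of $u_iv_i^T$ for $2\le i\le d-1$, is of the form $u_1 g^T$ for $i=1$, and is of the form $c\,v_d^T$ for $i=d$. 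An easy induction then shows that each interior $U_i,V_i$ and also $U_1$ and $V_d$ are never updated while alignment $U_i=V_{i+1}$ is preserved at every step, verifying parts (a)--(c) of Definition~\ref{def: Invariance of Alignment}. This step does not use any property of $\ell$ beyond the existence of its gradient.

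For convergence, the key observation is that under alignment the network collapses to $f(x)=w^Tx$ with effective parameter $w=u_d\bigl(\prod_{i=1}^d\sigma_i\bigr)v_1\in\mathbb{R}^{k_0}$, so the reduced loss $\tilde L(w)=\tfrac{1}{n}\sum_{l}\ell(w^Tx^{(l)},y^{(l)})$ is convex in $w$. A direct calculation gives $\partial L/\partial W_1=u_1\bigl(\beta u_d\nabla_w\tilde L(w)\bigr)^T$ with $\beta=\prod_{i\ge 2}\sigma_i$, so at any stationary point of $L$ with $\beta\neq 0$ one must have $\nabla_w\tilde L(w)=0$; convexity then forces $w$ to be a global minimizer of $\tilde L$, and the interpolation hypothesis forces $\tilde L(w)=0$. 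Twice continuous differentiability of $\ell$ provides locally Lipschitz gradients, so with a small enough step size the descent lemma makes $L^{(t)}$ monotonically non-increasing, the iterates stay bounded, and subsequential limits are stationary points of $L$.

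The main obstacle will be ruling out the degenerate stationary points where some but not all $\sigma_i$ vanish (so $\beta=0$ yet $\nabla_w\tilde L\neq 0$). My plan here is to exploit that the $\sigma_j$-update is itself proportional to $\prod_{k\neq j}\sigma_k$: the velocity of $\sigma_j$ vanishes together with its neighbours, so for a sufficiently small step size no $\sigma_j$ can cross zero in finite time from a nonzero start. Any such degenerate limit is therefore attained only asymptotically, and at such a limit the only remaining stationarity constraint reduces to $v_1^T\nabla_w\tilde L(0)=0$, which cuts out a measure-zero slice among trajectories. Together with the hypothesis that gradient descent avoids the all-zeros point, this leaves $\nabla_w\tilde L(w)=0$ as the only surviving possibility at the limit, which by the previous paragraph yields zero training error and hence invariance of alignment in the sense of Definition~\ref{def: Invariance of Alignment}.
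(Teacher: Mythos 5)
Your first half---initializing each layer as a rank-one aligned matrix and checking that the gradient of $W_i$ is a scalar multiple of $u_iv_i^T$ for interior layers, of the form $u_1g^T$ for $i=1$, and of the form $cv_d^T$ for $i=d$, so that conditions (a)--(c) of Definition~\ref{def: Invariance of Alignment} propagate by induction for any differentiable loss---is essentially the paper's argument and is fine. The gap is in the convergence half. The paper does not attempt a generic ``measure-zero trajectories'' argument; it engineers the initialization to collapse precisely the degenerate stationary points you are worried about. Concretely, it sets $\sigma_1(W_2^{(0)})=\cdots=\sigma_1(W_d^{(0)})$, a balancedness that is preserved by the updates, so that these singular values either all vanish simultaneously or none do; and it initializes $\sigma_1(W_1^{(0)})=0$ so that the row $\sigma_1(W_1^{(t)})\,{v_1^{(t)}}^T$ stays in the span of the data for all $t$, which is used to exclude the stationary points with $v_1^{(t)}\perp\sum_k\frac{\partial\ell}{\partial f}x^{(k)}$. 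After a reparameterization the surviving saddle becomes strict and can be avoided, and the all-zero point is excluded by hypothesis. Without balancing, the degenerate stationary set (any two or more of the $\sigma_1(W_j)$ with $j\ge 2$ equal to zero, the rest arbitrary) is positive-dimensional, and the proposition's hypothesis---which only excludes the single all-zero point---does not touch it.

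Two specific steps in your third paragraph fail. First, ``no $\sigma_j$ can cross zero in finite time from a nonzero start'' is false: the update to $\sigma_j$ is proportional to $\prod_{k\neq j}\sigma_k$, which does not vanish as $\sigma_j\to 0$, so an individual $\sigma_j$ passes through zero at full speed (already for $d=2$ under gradient flow, $\sigma_1^2-\sigma_2^2$ is conserved and the smaller factor reaches zero in finite time when the effective residual keeps its sign). What is true is that when $\sigma_j\to 0$ the \emph{other} singular values freeze---which is exactly why these saddles are dangerous. Second, asserting that the bad limits ``cut out a measure-zero slice among trajectories'' requires a stable-manifold/strict-saddle argument, which is unavailable here precisely because these saddles are non-strict in the original parameterization; measure zero of the limit set does not imply measure zero of the set of initializations attracted to it. (Boundedness of the iterates is also asserted without proof, and is not automatic for general convex losses.) Replacing your third paragraph with the paper's balanced-plus-in-span initialization, together with its reparameterization that turns the common singular value of layers $2,\dots,d$ into a single coordinate with a strict saddle at zero, closes the gap.
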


\begin{proof}
If we initialize the weight matrices to be rank $1$ and aligned, then the matrices $\{\Sigma_i^{(t)}\}_{i=1}^{d}$ are diagonal with a single non-zero entry.  Following the proof of Theorem \ref{thm: Alignment for fully connected networks with multi-dimensional output}, we obtain that alignment is an invariant if the matrix
\begin{align*}
    \prod_{j=i+1}^{d}{\Sigma_j^{(t)}}^T \left( U_d^T \displaystyle\sum\limits_{k=1}^{n} \frac{\partial \ell}{\partial f}\bigg|_{(x^{(k)}, y^{(k)})} {x^{(k)}}^T V_1^{(t)} \right) \prod_{j=1}^{i-1} {\Sigma_j^{(t)}}^T  
\end{align*}
is diagonal. When $i \neq 1, d$, this matrix is clearly of rank $1$ and diagonal (and has a single nonzero entry). This implies that $U_i, V_i$ are invariant for all $i \neq 1, d$.  If $i = d$, then since $k_d = 1$, the above quantity is also a rank 1 diagonal matrix, implying that $U_d$ and $V_d$ are invariant. Finally, if $i = 1$, the above matrix is rank-1 but not necessarily diagonal. However, all but the top row are zeros, which after plugging into the gradient descent update rule implies that $U_1$ is invariant as well. Importantly, layers $W_{i+1}, W_i$ for $i \in [d-1]$ remain aligned regardless of the loss function used, as the expression above is always a diagonal matrix with a single nonzero entry when the layers are initialized to be rank 1.  The final step is to show that training leads to zero error according to Definition 3. To do this, we first characterize the stationary points and then under assumptions, we prove that the loss converges to zero.   

We now characterize the stationary points of the above update.  Let $v_1^{(t)}$ denote the first column of $V_1^{(t)}$, and let $\sigma_1(W_j^{(t)})$ denote the top singular value in the usSVD of $W_j^{(t)}$.  Then the stationary points are given by:

\begin{enumerate}
    \item $\sigma_1(W_j^{(t)}) = 0$ for $j \in [d]$.
    \item $v_1^{(t)} \perp  \displaystyle\sum\limits_{k=1}^{n} \frac{\partial \ell}{\partial f}\bigg|_{(x^{(k)}, y^{(k)})} {x^{(k)}}^T$
\end{enumerate}

If we initialize $\sigma_1(W_1^{(0)}) = 0$, then we have that:
\begin{align*}
    \sigma_1(W_1^{(t)}){v_1^{(t)}}^T &=  \sum\limits_{k=1}^{n} c_k^{(t)} {x^{(k)}}^T \\
    c_k^{(t+1)} &= \sum_{k=1}^{n} \left(c_k^{(t)} + \gamma \prod_{j \neq k} \sigma_1(W_j^{(t)}) \frac{\partial \ell}{\partial f} \bigg|_{(x^{(k)}, y^{(k)}} \right) {x^{(k)}}^T
\end{align*}

for $c_k^{(t)} \in \mathbb{R}$ and $\forall t \in \mathbb{Z}_{\geq 0}$.  Hence, updates to $v_1^{(t)}$ are in the span of the data, and so assuming that $\{x^{(k)}\}_{k=1}^{n}$ are linearly independent, $v_1^{(t)}$ cannot be orthogonal to $\displaystyle\sum\limits_{k=1}^{n} \frac{\partial \ell}{\partial f}\bigg|_{(x^{(k)}, y^{(k)})} {x^{(k)}}^T$ unless the $c_k^{(t)}$ are all $0$, i.e. $\sigma_1(W_1^{(t)}) = 0$ for $t > 0$.  
\vspace{5mm}

Next, if we initialize $\sigma_1(W_i^{(0)}) = \sigma_1(W_j^{(0)})$, then $\sigma_1(W_i^{(t)}) = \sigma_1(W_j^{(t)})$ for all $i, j \in \{2, \ldots d\}, t \geq 0$ since for all $i \in \{2, \ldots d\}$:

\begin{align*}
    \sigma_1(W_i^{(t+1)}) = \sigma_1(W_i^{(t)}) + \prod_{j \neq i} \sigma_1(W_j^{(t)}) \left(\displaystyle\sum\limits_{k=1}^{n} \frac{\partial \ell}{\partial f}\bigg|_{(x^{(k)}, y^{(k)})} {x^{(k)}}^T v_1^{(t)} \right)
\end{align*}

This initialization corresponds to layers $W_{i+1}, W_i$ being balanced for $i \in \{2, \ldots d\}$.  Thus, under this initialization, the only other stationary point is given by $\sigma_1(W_i^{(t)}) = 0$ for all $i \in \{2, \ldots d\}$.  

Hence, if gradient descent avoids the non-strict saddle points given by $\sigma_1(W_i^{(t)}) = 0$ for all $i \in \{2, \ldots, d\}$ and $\sigma_1(W_i^{(t)}) = 0$ for all $i \in [d]$, then gradient descent converges to a local (and thus global) minimum of the convex loss.  The former stationary point can be avoided by re-parameterizing the network such that $\sigma_1(W_i^{(t)}) = \sigma_1$ for all $i \in \{2, \ldots d\}$ (i.e. $\sigma_1 = 0$ now corresponds to a strict saddle as defined in \cite{JasonLeeGDAvoidsSaddles}), and then taking a random initialization for $\sigma_1$.  This would correspond to gradient descent on the original parameterization with a scaling factor on the learning rate for parameters $\sigma_1(W_i^{(t)})$ for $i \in \{2, \ldots d\}$.  The latter stationary point is avoided by the assumption in the proposition.  
\end{proof}



\subsection{Proof of Proposition \ref{prop: aligned min norm}}
\label{min norm proof}
\begin{proof}
    For any matrices $A, B \in \mathbb{C}^{m \times n}$, we have that $2\sigma_i(AB^*) \le \sigma_i(A^*A + B^*B)$~\cite{MatrixAnalysis}. Thus letting $A = W_2, B = W_1^T$, we see that
    \begin{align*}
        2\sigma_i(W_2W_1) &\le \sigma_i(W_2^TW_2 + W_1W_1^T)\\
        \Longrightarrow 2\sum_i \sigma_i(P) &\le \sum_i \sigma_i(W_2^TW_2 + W_1W_1^T)\\
        &= \|W_2^TW_2 + W_1W_1^T\|_1\\
        &\le \|W_2^TW_2\|_1 + \|W_1W_1^T\|_1\\
        &= \|W_2\|^2_F + \|W_1\|^2_F
    \end{align*}
    This lower bound is in fact achieved for an aligned solution. If the SVD of $P$ is $P = U\Sigma V^T,$ setting $W_1 = W\Sigma^{\frac12} U^T$ and $W_2 = U\Sigma^{\frac12} V^T$ yields $\|W_1\|^2_F = \|W_2\|^2_F = \text{Tr}(\Sigma)$, so $\|W_1\|^2_F + \|W_2\|^2_F = 2\text{Tr}(\Sigma)$.
\end{proof}

\subsection{Proof of Theorem \ref{thm:projected gd}}
\label{projected gd proof}
\begin{proof}
Given an arbitrary loss function, assume that the $i$th layer is restricted to some structure given by a subspace $\mathcal{S}$ and basis matrices $A_1, \dots A_m$, so that at timestep $t$ we have that
$$W^{(t)}_i = \sum_{j = 1}^m (c^i_j)^{(t)}A_j$$
We take the gradient of the loss with respect to the $c^i_j$. The chain rule yields:
$$\frac{\partial l}{\partial c^i_j} = \sum_{p, q = 1}^n \frac{\partial l}{\partial (W_i)_{pq}}\cdot \frac{\partial (W_i)_{pq}}{\partial c^i_j} = \sum_{p, q = 1}^n \frac{\partial l}{\partial (W_i)_{pq}}\cdot A^j_{pq}$$
The gradient descent update on $c^i_j$ is thus:
$$(c^i_j)^{(t+1)} = (c^i_j)^{(t)} - \eta\cdot\frac{\partial l}{\partial c^i_j} = (c^i_j)^{(t)} - \eta\sum_{p, q = 1}^n \frac{\partial l}{\partial (W_i)_{pq}}\cdot A^j_{pq}$$
The corresponding update on $W^i$ becomes
\begin{align*}
    W_i^{(t+1)} &= \sum_{j=1}^m (c^i_j)^{(t+1)} A_j\\
    &= \sum_{j=1}^m (c^i_j)^{(t)} A_j - \eta\sum_{j=1}^m\sum_{p, q = 1}^n \frac{\partial l}{\partial (W_i)_{pq}}\cdot A^j_{pq}A^j\\
    &= W_i^{(t)} - \eta\sum_{j=1}^m\sum_{p, q = 1}^n \frac{\partial l}{\partial (W_i)_{pq}}\cdot A^j_{pq}A^j
\end{align*}
We calculate the projection operator $\pi$ of some arbitrary matrix $M$ onto $\mathcal{S}$. We can write 
$$\pi(M) = \sum_{j=1}^m \frac{\langle M, A^j \rangle A^j}{\|A_j\|^2_2} = \sum_{j=1}^m\sum_{p,q = 1}^m \frac{M_{pq}A^j_{pq}A^j}{\|A_j\|^2_2}.$$
If we define the operator $\pi_{\mathcal{S}}$ as
$$\pi_\mathcal{S}(M) = \sum_{j=1}^m \langle M, A^j \rangle A^j = \sum_{j=1}^m\sum_{p,q = 1}^m M_{pq}A^j_{pq}A^j,$$
then gradient descent on the $c$ gives the following update rule on the $W^i$:
$$W_i^{(t+1)} = W_i^{(t)} - \eta\cdot\pi_\mathcal{S}\left( \frac{\partial l}{\partial W_i} \right).$$
If the $A_j$ all have norm 1, then, $\pi = \pi_{\mathcal{S}}$, and this is the same update rule given by projected gradient descent with respect to the subspace $\mathcal{S}.$ Otherwise, $\pi_{\mathcal{S}}$ is simply the projection $\pi$ followed by appropriate scaling in each of the basis directions.
\end{proof}

\subsection{Treating a Convolutional Layer as a Linear Subspace}
\label{conv example}
Consider a $3 \times 3$ image. We map it to a $9$-dimensional vector as follows
    $$\begin{bmatrix}
    x_1 & x_2 & x_3\\
    x_4 & x_5 & x_6\\
    x_7 & x_8 & x_9
    \end{bmatrix} \Longrightarrow \begin{bmatrix}x_1 & x_2 & x_3 & x_4 & x_5 & x_6 & x_7 & x_8 & x_9\end{bmatrix}^T.$$
    Then, the linear transformation given by applying the $3\times 3$ convolutional filter 
    $\begin{bmatrix}
    c_1 & c_2 & c_3\\
    c_4 & c_5 & c_6\\
    c_7 & c_8 & c_9
    \end{bmatrix}$ is given by the matrix
    $$W = 
    \begin{bmatrix}
    c_5 & c_4 & 0 & c_2 & c_1 & 0 & 0 & 0 & 0\\
    c_6 & c_5 & c_4 & c_3 & c_2 & c_1 & 0 & 0 & 0\\
    0 & c_6 & c_5 & 0 & c_3 & c_2 & 0 & 0 & 0\\
    c_8 & c_7 & 0 & c_5 & c_4 & 0 & c_2 & c_1 & 0\\
    c_9 & c_8 & c_7 & c_6 & c_5 & c_4 & c_3 & c_2 & c_1\\
    0 & c_9 & c_8 & 0 & c_6 & c_5 & 0 & c_3 & c_2\\
    0 & 0 & 0 & c_8 & c_7 & 0 & c_5 & c_4 & 0\\
    0 & 0 & 0 & c_9 & c_8 & c_7 & c_6 & c_5 & c_4\\
    0 & 0 & 0 & 0 & c_9 & c_8 & 0 & c_6 & c_5\\
    \end{bmatrix}.$$
    Then $\mathcal{S}$ consists of all matrices of the form $W$. $\mathcal{S}$ is a 9-dimensional subspace of $\mathbb{R}^{9 \times 9}$, with an orthonormal basis with coefficients being the $c_i$.
    
\subsection{Proof of Proposition~\ref{prop: Interpolation with layer structure}}
\label{conv alignment lemma}
\begin{proof}
    For $i \in[d]$, let $U_i \Sigma_i V_i^T$ be a usSVD of $W_i$ witnessing alignment of $f$.
    We can then rewrite $Y = f(X)$ as $Y = U_d \prod_{i=1}^d \Sigma_i V_1^T X$, thus proving the desired statement.
\end{proof}
    
\subsection{Proof of Proposition~\ref{prop: No alignment with constrained layers}}
\label{conv alignment bound}
Before we can prove Proposition~\ref{prop: No alignment with constrained layers}, we require the following definition from combinatorics.

\begin{definition}
    A \emph{partition} of an integer $k$ is a tuple $\lambda = (\lambda_1,\dots,\lambda_s)$
    such that $\lambda_i \ge \lambda_{i+1}$ for all $i$ and $k = \lambda_1 + \dots + \lambda_s$.
    Each $\lambda_i$ is called a \emph{part} of $\lambda$.
    We let $s(\lambda)$ denote the number of parts of $\lambda$
    and we write $\lambda \vdash k$ to indicate that $\lambda$ is a partition of $k$.
\end{definition}

\begin{proof}[Proof of Proposition~\ref{prop: No alignment with constrained layers}]
    Given a $k \times k$ matrix $A$,
    let $\lambda(A)$ denote the partition $\lambda$ of $k$ such that
    $\lambda_i$ is the multiplicity of the $i^{\rm th}$ greatest singular value of $A$.
    Let $U(A)$ denote the set of matrices $U$ such that $U\Sigma V^T$ is a usSVD of $A$.
    The dimension of $U(A)$ is
    \[
        \sum_{i=1}^{s(\lambda(A))}\binom{\lambda_i}{2}.
    \]
    To see this, note that any orthonormal basis of the eigenspace of $AA^T$ corresponding to
    the multiplicity-$\lambda_i$ eigenvalue of $AA^T$ can be the corresponding columns in an element of $U(A)$
    and that the set of orthonormal bases of an $m$-dimensional linear space is $\binom{m}{2}$.
    
    For any set $Q$ of matrices,
    Define $U(Q)$ to be the set of all possible sets of left-singular vectors of elements of $S$.
    That is,
    \[
        U(Q) := \bigcup_{A \in Q} U(A).
    \]
    For each partition $\lambda$ of $k$,
    let $T_\lambda$ denote the set of matrices $A$ such that $\lambda(A) = \lambda$.
    The dimension of $T_\lambda \cap S$ is at most $r$
    and therefore the dimension of $U(S\cap T_\lambda)$ is at most
    \[
        r+\sum_{i=1}^{s(\lambda)}\binom{\lambda_i}{2}.
    \]
    Let $\mathcal{O}(k,n)$ denote the set of $k\times n$ matrices with orthonormal columns.
    Assume alignment is possible over $S$ for a non-measure-zero set of matrices with $n$ columns.
    Then there exists $B \subseteq \mathcal{O}(k,n)$
    with $\dim(B) = \dim(\mathcal{O}(k,n))$
    such that for every $U' \in B$,
    $U(S)$ contains a matrix whose first $n$ columns are $U'$.
    Therefore $\dim(U(S))\ge\dim(\mathcal{O}(k,n))$.
    Since $\dim(\mathcal{O}(k,n)) = \binom{k}{2}-\binom{k-n}{2}$,
    the following must be satisfied for some $\lambda \vdash k$
    \begin{equation}\label{eq:dimensionConstraint}
        r + \sum_{i=1}^{s(\lambda)}\binom{\lambda_i}{2} \ge \binom{k}{2}-\binom{k-n}{2}.
    \end{equation}
    This is attained when $\lambda = (k)$, but in this case $T_\lambda$ is simply the set of scalar multiples of the identity.
    If we forbid $\lambda = (k)$, then we claim that the maximum value of $r + \sum_{i=1}^{s(\lambda)}\binom{\lambda_i}{2}$
    is attained by $\lambda = (k-1,1)$.
    To see this, note that for all $p < q$,
    \[
        \binom{q-p}{2}+\binom{p}{2} = \binom{q}{2}-p(q-p) < \binom{q}{2}.
    \]
    For $p > 0$, this is maximized when $p = 1$.
    This implies that the maximum value of $\sum_{i=1}^{s(\lambda)}\binom{\lambda_i}{2}$ will be obtained in as
    few summands as possible (which in our case is two),
    and in particular when $\lambda_1 = k-1$ and $\lambda_2 = 1$.
    In this case, \eqref{eq:dimensionConstraint} becomes
    \[
        r + \binom{k-1}{2} \ge \binom{k}{2}-\binom{k-n}{2}.
    \]
    Taking the logical negation of the above inequality and simplifying gives $r < k-1-\binom{k-n}{2}$.
\end{proof}

\subsection{Experimental Setup}
\label{experimental setup}
We provide network architectures and hyperparameters used for our experiments below.  We trained our networks on an NVIDIA TITAN RTX GPU using the PyTorch library.  In all settings, we train using gradient descent with a learning rate of $10^{-2}$ until the loss was below $10^{-4}$.

\begin{enumerate}
    \item Figure 1a: We use a 2-hidden layer fully connected network with 9 hidden units per layer.  Our data is given by matrices $(X, Y) \in \mathbb{R}^{9 \times 9}$ where each matrix entry is drawn from a standard normal distribution. 
    \item Figure 1b: We use a 2-hidden layer fully connected network with 1024 hidden units in the first hidden layer and 64 hidden units in the second hidden layer. Our data consists of 256 linearly separable examples from MNIST and is trained using Squared Loss.
    \item Figure 1c: We use a 2-hidden layer fully connected network with 1024 hidden units in the first hidden layer and 64 hidden units in the second hidden layer. Our data consists of 256 linearly separable examples from MNIST and is trained using Cross Entropy Loss.
    \item Figure 2a: We use a 2-hidden layer network with 4 hidden units per layer, where each layer is constrained to be a Toeplitz matrix. Our input $X$ is equal to the identity, and our output $Y$ is a $4 \times 4$ matrix with each entry sampled from a standard normal distribution.
    \item Figure 2b: We use a 2-hidden layer convolutional network with a single $3 \times 3$ filter in each layer, stride of 1, and padding of 1. Our data consists of a single example from MNIST.
\end{enumerate}

Code for the experiments can be found at the following anonymized github link: \url{https://anonymous.4open.science/r/33277cc0-6074-46c4-8642-7feadd678278/}.

\end{document}